\documentclass{article} 

\usepackage{iclr2026_conference,times}

\usepackage{amsmath,amsfonts,bm}

\def\eqref#1{equation~\ref{#1}}

\def\1{\bm{1}}

\DeclareMathAlphabet{\mathsfit}{\encodingdefault}{\sfdefault}{m}{sl}
\SetMathAlphabet{\mathsfit}{bold}{\encodingdefault}{\sfdefault}{bx}{n}

\usepackage{fontawesome5}
\usepackage{hyperref}
\usepackage{url}
\usepackage{graphicx}
\usepackage{xspace}
\usepackage{makecell}
\usepackage{array}
\usepackage{colortbl}
\usepackage{multirow}
\usepackage[normalem]{ulem}
\usepackage{subcaption}
\usepackage{tabularx}
\usepackage{color}
\usepackage{pifont}
\usepackage[utf8]{inputenc}
\usepackage[T1]{fontenc}
\usepackage{CJKutf8}
\usepackage{enumitem}
\usepackage{diagbox}
\usepackage{listings}
\usepackage[flushleft]{threeparttable}
\usepackage{booktabs}
\usepackage{xcolor}      
\usepackage{caption}     
\usepackage{booktabs}    
\usepackage{colortbl}    
\usepackage{wrapfig}
\usepackage{algorithmicx,algorithm}
\usepackage{threeparttable}
\usepackage{xcolor}
\definecolor{redorange}{rgb}{1, 0.5, 0}
\definecolor{darkgreen}{rgb}{0.0, 0.5, 0.0}
\definecolor{darkred}{rgb}{0.7, 0.0, 0.0}
\usepackage{pifont}
\usepackage{tikz}
\usepackage{bm}

\usepackage{marvosym}

\newcommand{\with}{\textcolor{darkgreen}{\textbf{\checkmark}}}  
\newcommand{\without}{\textcolor{darkred}{\ding{55}}} 
\newcommand{\replace}{\tikz\draw[redorange, thick] (0,0) circle (0.3em);}

\makeatletter
\def\@fnsymbol#1{%
  \ifcase#1\or \faEnvelope \or \dagger \or \ddagger \or
  \mathsection \or \mathparagraph \or \| \or ** \or \dagger\dagger
  \or \ddagger\ddagger \else \@ctrerr \fi}
\makeatother

\title{CoRA: Boosting Time Series Foundation Models for Multivariate Forecasting through Correlation-aware Adapter}

\iclrfinalcopy
\author{Hanyin Cheng$^{1}$, Xingjian Wu$^{1}$, Yang Shu$^{1}$, Zhongwen Rao$^{2}$, Lujia Pan$^{2}$ \\\textbf{Bin Yang$^{1}$, Chenjuan Guo$^{1}$\textsuperscript{\Letter}
}\\
$^1$East China Normal University
\\$^2$Huawei Noah’s Ark Lab\\
\texttt{\{hycheng,xjwu\}@stu.ecnu.edu.cn}, \\
\texttt{\{raozhongwen,panlujia\}@huawei.com}, \\
\texttt{\{yshu,byang,cjguo\}@dase.ecnu.edu.cn}, \\
}

\newtheorem{theorem}{Theorem}

\iclrfinalcopy 
\begin{document}

\maketitle
\begin{abstract}
Most existing Time Series Foundation Models (TSFMs) use channel independent modeling and focus on capturing and generalizing temporal dependencies, while neglecting the correlations among channels or overlooking the different aspects of correlations. However, these correlations play a vital role in Multivariate time series forecasting. 
  To address this, we propose a \textbf{\underline{Co\textcolor{black}{R}}}relation-aware \textbf{\underline{A}}dapter (\textbf{CoRA}), a lightweight plug-and-play method that requires only fine-tuning with TSFMs and is able to capture different types of correlations, so as to improve forecast performance. Specifically, to reduce complexity, we innovatively decompose the correlation matrix into low-rank Time-Varying and Time-Invariant components. For the Time-Varying component, we further design learnable polynomials to learn dynamic correlations by capturing trends or periodic patterns. To learn positive and negative correlations that appear only among some channels, we introduce a novel dual contrastive learning method that identifies correlations through projection layers, regulated by a Heterogeneous-Partial contrastive loss during training, without introducing additional complexity in the inference stage. Extensive experiments on 10 real-world datasets demonstrate that CoRA can improve TSFMs in multivariate forecasting performance.
\end{abstract}

\section{Introduction}
Time Series Foundation Models (TSFMs) that show strong generalization are proposed recently. Through pre-training on large-scale time series data ~\citep{wang2025lightgts,moment,Timer,wang2024rose}  or the use of large language models~\citep{GPT4TS,AutoTimes,unitime,timllm}, these models maintain strong reasoning abilities when handling new or unseen data. 

At the same time, multivariate time series forecasting, as a pivotal domain in data analysis, is widely applied in various industries~\citep{qiu2025DBLoss,wu2024catch,liu2026astgi,qiu2025dag,liu2025rethinking,wang2024deep,qiu2025tab}. Properly modeling and utilizing correlations in multivariate time series can significantly improve the performance of forecasting models~\citep{zhang2022crossformer,liu2023itransformer,MTGNN}. Based on different interaction characteristics among channels, as shown in Figure  \ref{intro}a, correlation can be summarized into three aspects: \textit{dynamic correlation (DCorr)} describes the variation of channel relationships over time~\citep{zhao2023multiple,EnhanceNet}; \textit{heterogeneous correlation (HCorr)} focuses on how channels interact with each other by considering positive and negative correlations~\citep{CROSSGNN};  \textit{partial correlation (PCorr)} emphasizes that correlation exists only among certain channels, and modeling interactions across all channels can easily introduce noise~\citep{chen2024similarity,qiu2025duet,liu2024dgcformer}.
Considering more comprehensive correlations provides richer information for the models.

However, most existing TSFMs focus on capturing and generalising temporal dependencies and neglect relationships among channels~\citep{moment,ansari2024chronos,Timer,unitime,timllm,time-moe}. Although some models like TTM~\citep{ekambaram2024ttms}, UniTS~\citep{units}, and Moirai~\citep{moiral} employ different methods to model the correlations among channels,  they do not comprehensively consider multiple aspects of the correlations. 
\textcolor{black}{
For example, TTM employs an MLP-based channel mixing approach, but the MLP weights remain unchanged across different time steps, thereby failing to model DCorr while indiscriminately modelling interactions among all channels, and thus failing to capture HCorr and PCorr explicitly.}
\begin{wrapfigure}{r}{0.66\textwidth} 
    \vspace{0pt}  
  \centering
  \includegraphics[width=\linewidth]{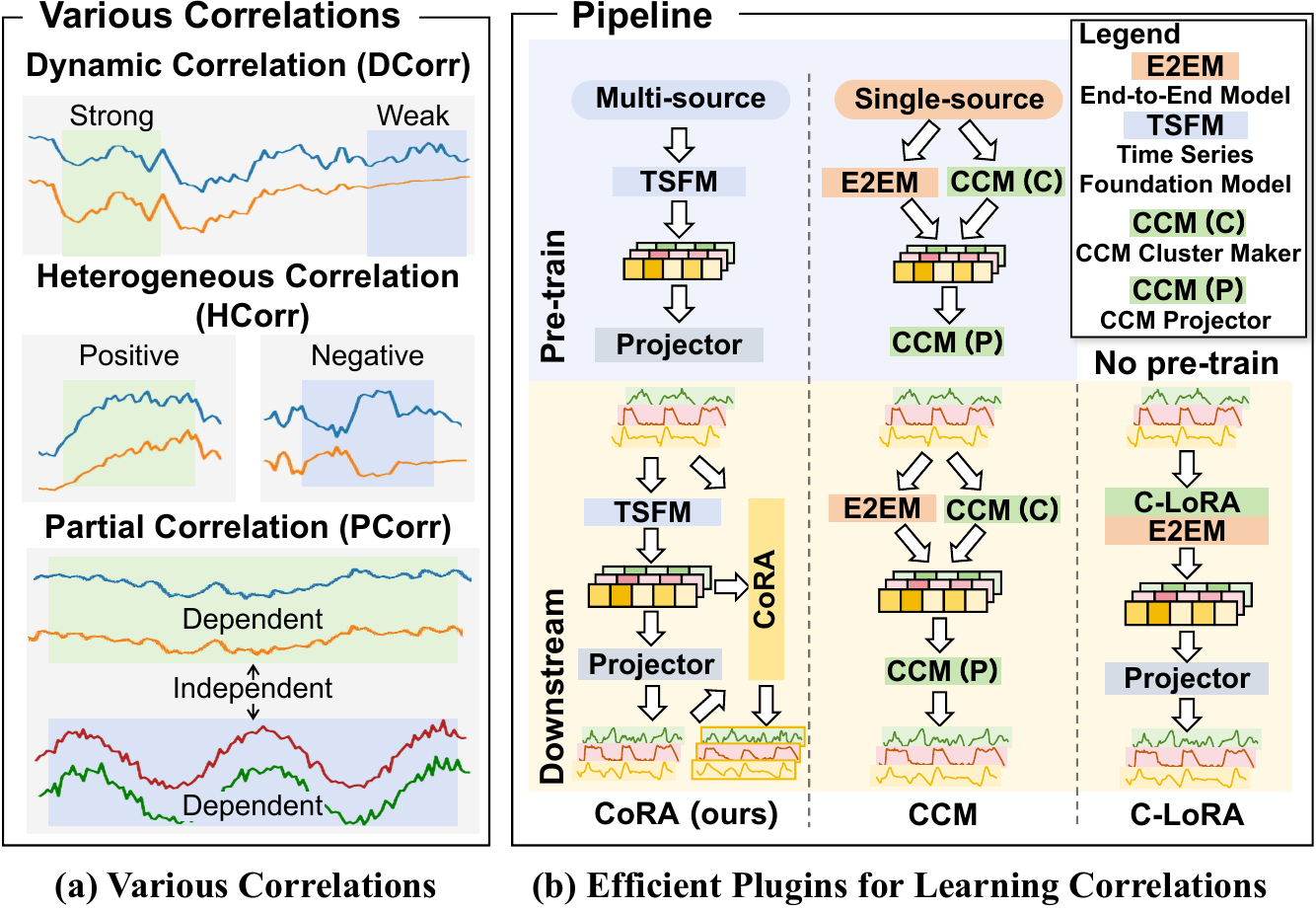}
  \caption{ (a) Illustration of three different types of correlations, the formal definitions are provided in Appendix~\ref{app: definitions}. (b) Comparisons of different plugins for learning correlations }
  \label{intro}
  \vspace{-15pt} 
\end{wrapfigure}
While the attention mechanisms used in UniTS and Moirai assign different attention scores at different time points, they still 
interact all channels simultaneously without considering HCorr and PCorr, thus leading to suboptimal correlation modeling.
Furthermore, due to the variations in correlations across different datasets, it is difficult to capture generalized correlations during the pre-training phase~\citep{ekambaram2024ttms}.

Thus, it motivates us to design a plugin that can be fine-tuned alongside TSFMs, which avoids issues caused by correlation differences across datasets during the pre-training phase. Meanwhile, it \textcolor{black}{possesses} the ability to depict various correlations while also incorporating a lightweight design. However, this faces a major challenge:\textbf{ balancing the complete modeling of various correlations with the lightweight design.}
It is intrinsically difficult to model  all three correlations in a unified manner.
Although some models could address DCorr~\citep{zhao2023multiple,EnhanceNet}, HCorr~\citep{CROSSGNN} and PCorr~\citep{qiu2025duet,liu2024dgcformer} individually, they struggle to effectively encompass various correlations simultaneously.
Moreover, existing channel interaction methods often rely on MLPs~\citep{ekambaram2023tsmixer,ttm}, Transformers~\citep{liu2023itransformer,FECAM} and GNNs~\citep{MTGNN,MSGNet}, etc., which have a time complexity of $ \mathcal{O}(N^2)$, where $N$ denotes the number of channels. 
Some methods~\citep{zhang2022crossformer,chen2024similarity,C-LoRA} have made efforts in reducing the complexity. However, end-to-end models such as Crossformer~\citep{zhang2022crossformer} require modifying or redesigning the entire model structure, and thus cannot be directly used as plugins for TSFMs.
Existing plugins are primarily designed for end-to-end forecasting models. 
 CCM~\citep{chen2024similarity}  requires additional pre-training together with the end-to-end models before it can be plugged in.  C-LoRA~\citep{C-LoRA} is designed to be trained with an end-to-end backbone from scratch. Overall, there is a lack of an efficient plugin specifically designed for  downstream fine-tuning of TSFMs.
More importantly, considering various correlations in these methods would lead to a higher complexity.

To address this, we propose CoRA, a lightweight plug-and-play method that only requires training on a few samples with TSFMs during the fine-tuning phase.
By considering various correlations, CoRA utilises internal representations and original predictions from TSFMs to generate an enhanced prediction, as shown in Figure \ref{intro}b. To complete modeling \textcolor{black}{the mentioned three types of} correlation, we first propose the \textbf{Dynamic Correlation Estimation (DCE)} module which can learn dynamic correlation matrices. Then we design the \textbf{Heterogeneous-Partial Correlation Contrastive Learning (HPCL)} that uses the correlation matrices from DCE to learn HCorr and PCorr adaptively.
Specifically,  to achieve lightweight, we innovatively decompose the correlation matrices into two low-rank components: Time-Varying and Time-Invariant in DCE module. To better understand how DCorr evolves, we propose a learnable polynomial to capture trend or periodic patterns within the DCorr effectively.
Afterwards, to better distinguish of HCorr, we propose channel-aware projections to map the representations into positive and negative correlation spaces. The projections are guided by the novel Heterogeneous-Partial Contrastive Loss during the training process, which enables adaptive learning of PCorr in the two HCorr spaces. As a result, we can capture \textcolor{black}{the mentioned three types of} correlations with linear complexity w.r.t. the number of channels during inference.
Our contributions are summarized as follows:
\begin{itemize} 
    \item We design a universal, lightweight plugin that allows TSFMs to capture \textcolor{black}{the mentioned three types of} correlations without re-pre-training the TSFMs.
    \item We propose a lightweight Dynamic Correlation Estimation module that explicitly models the dynamic patterns of correlations in a lightweight manner.
    \item We propose a novel Heterogeneous-Partial Correlation Contrastive Learning, which learns HCorr and PCorr through projection layers regulated by dual contrastive loss.
    \item We conducted extensive experiments on 10 real-world datasets. The results show that CoRA effectively improves the performance of TSFMs in multivariate forecasting. 
\end{itemize}

\section{Related Work}
\subsection{Foundation Models for Time Series Forecasting}
TSFMs for forecasting can be divided into two sections: \textbf{1) LLM-based Models:} These methods leverage the strong
representational capacity and sequential modeling capability of LLMs to capture complex patterns for time series modeling. Among them, GPT4TS~\citep{GPT4TS} and CALF~\citep{liu2024taming} selectively modify certain parameters of LLMs
\textcolor{black}{to} enable the model to adapt to time series data. On the other hand, UniTime\citep{unitime}, S$^{2}$IP-LLM~\citep{S2IP-LLM}, LLMMixer~\citep{kowsher2024llm}, and Time-LLM~\citep{timllm} focus on creating prompts
to trigger time series knowledge within LLMs.
\textbf{2) Time Series Pre-trained Models:}
Pre-training on multi-domain time series equips these models with strong generalization capabilities. Among them, ROSE~\citep{rose} and Moment~\citep{moment} restore the features of time series data, enabling them to extract valuable information in an unsupervised manner. 
On the other hand, TimesFM~\citep{timesfm} and Timer~\citep{Timer}, 
using an autoregressive approach, employ next-token prediction to learn time series representations.
Generally speaking, most TSFMs are based on channel-independent strategies, with only a few~\citep{units,ekambaram2024ttms,moiral} modeling relatively simple inter-channel relationships. The effects of more complex correlations in TSFMs remain under-explored.
\subsection{Correlation of channels in Time Series Forecasting}
Channel correlation plays a crucial role in enhancing the predictions\citep{qiu2025comprehensivesurveydeeplearning}. They can be divided into specialized models and plugins from a paradigm perspective.
\textbf{1) Correlation Models:} These models are typically based on foundational architectures such as MLP~\citep{ttm,ekambaram2023tsmixer}, 
GNN~\citep{gts,MSGNet}, and Transformer~\citep{liu2023itransformer,zhang2022crossformer}.
For example, TSMixer\citep{ekambaram2023tsmixer} and TTM\citep{ttm} directly mix all channels using MLP.
MTGNN\citep{MTGNN} and Ada-MsHyper\citep{Ada-MSHyper} treat different channels as distinct nodes, performing message passing to facilitate channel interactions. Furthermore, iTransformer\citep{liu2023itransformer} and Crossformer\citep{zhang2022crossformer}  treat different channels as distinct tokens and utilize transformers to realize channel interaction.
\textbf{2) Correlation Plugins:} 
Some plugins enhance the predictive capability of models by learning correlation~\citep{cirstea2022towards,EnhanceNet}.
For example, LIFT~\citep{zhaorethinking} leverages locally relationships to extract correlations. CCM~\citep{chen2024similarity} further performs clustering and creates dedicated prediction heads for each cluster. 
However,
the methods above either lack comprehensive correlation modeling capabilities or possess substantial complexity.

\section{Preliminaries}

\textbf{Time Series Forecasting.} Given a multivariate time series $\mathbf{X}_t = (\mathbf{x}_{t-L:t}^i)_{i=1}^N \in \mathbb{R}^{N \times L}$ with a look-back window $L$ and $N$ channels, the forecasting task aims to predict the future $F$ steps $\mathbf{\hat{Y}}_t = (\mathbf{\hat{x}}_{t:t+F}^i)_{i=1}^N \in \mathbb{R}^{N \times F}$, corresponding to the ground truth $\mathbf{Y}_t = (\mathbf{x}_{t:t+F}^i)_{i=1}^N$.


\textbf{Correlation-Aware Adapter for Foundation Models.} Consider a pre-trained Time Series Foundation Model (TSFM), denoted by $\mathcal{F}$. During the downstream fine-tuning phase, the model processes the input data $\mathbf{X}_t^{ft}$ to generate initial predictions, formulated as $\mathbf{\hat{Y}}_t^{ft} = \mathcal{F}(\mathbf{X}_t^{ft})$. Concurrently, the TSFM extracts the intermediate series representation, denoted as $\mathbf{\tilde{X}}_t^{ft}$.
Given the downstream input $\mathbf{X}_t^{ft}$, its corresponding ground truth $\mathbf{Y}_t^{ft}$, the initial predictions $\mathbf{\hat{Y}}_t^{ft}$, and the latent representations $\mathbf{\tilde{X}}_t^{ft}$, a correlation-aware adapter is to fine-tune the base model $\mathcal{F}$ to yield an adapted model $\mathcal{F}^*$. Here, $\mathcal{F}^*$ represents the foundation model augmented with the proposed adapter. During the inference phase, predictions on unseen test data are generated via $\mathbf{\hat{Y}}_t^{test} = \mathcal{F}^*(\mathbf{X}_t^{test})$.

\section{Methodology}

\begin{figure}[h]  
    \centering
    \includegraphics[width=1\linewidth]{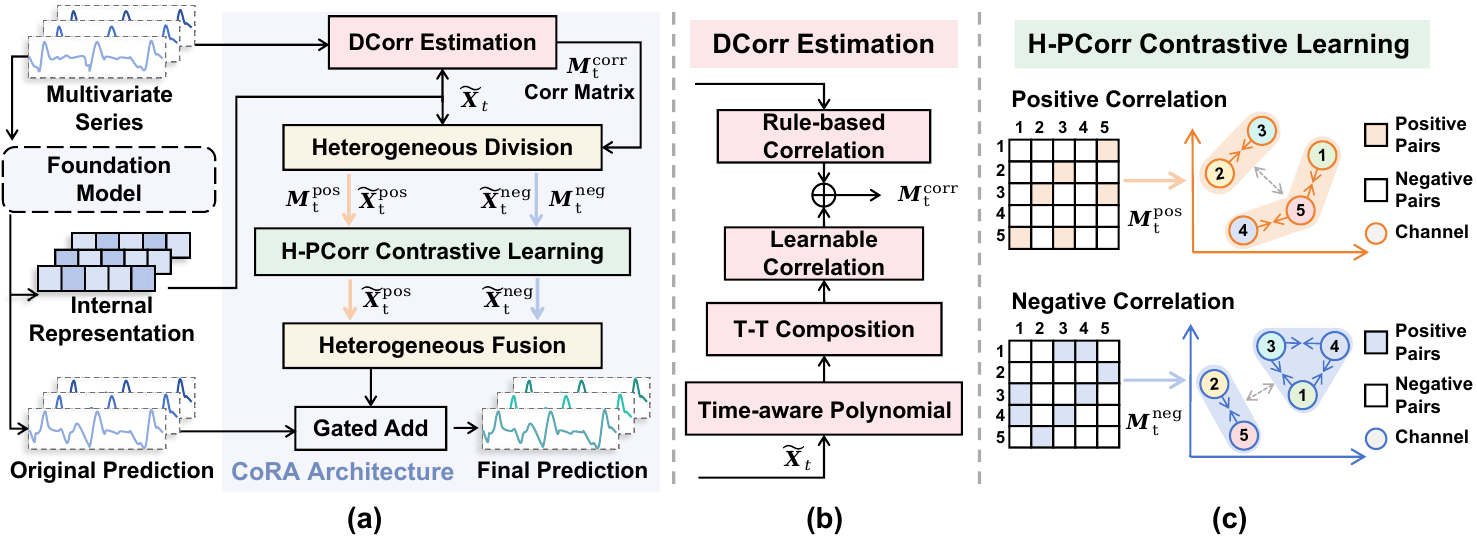} 
    \caption{The framework of CoRA. (a) CoRA begins by learning DCorr in Dynamic Correlation Estimation module. Heterogeneous Division module projects representations into positive and negative spaces for HCorr. Then CoRA conducts H-PCorr Contrastive Learning in each space to guide projection and capture PCorr. (b) The DCorr Estimation module estimates correlations by combining Rule-based Correlations and Learnable Correlations, which are computed by Time-aware Polynomial and Time-Varying and Time-Invariant (T-T) Composition. (c) H-PCorr contrastive learning minimizes distances between strongly correlated channels and maximizes separation between weakly correlated channels in both positive and negative spaces.}
    \label{overview}
\end{figure}

In this work, we propose a \textbf{Cor}relation-aware \textbf{A}dapter (\textbf{CoRA}), a lightweight plugin that allows the TSFMs to capture various correlations during the fine-tuning stage. The framework of CoRA is visualized in Figure \ref{overview} \textcolor{black}{.} CoRA operates on input series, original predictions, and representations from TSFMs to enhance the prediction accuracy. 
Our method consists of four processes: \textbf{(i) Dynamic Correlation Estimation.} 
 This module utilize representations from TSFMs and input series to learn dynamic correlations and generate correlation matrices that guide subsequent contrastive learning.
 \textbf{(ii) Heterogeneous Division.} 
  Some channels show dependencies on positive correlations, whereas some others show negative correlations. To better capture HCorr, we design the this module to process the representations from the backbone and learn representations of positive and negative correlations separately. 
\textbf{(iii) Heterogeneous Partial Correlation (H-PCorr) Contrastive Learning.} 
We propose H-PCorr Contrastive Learning within each representation of HCorr to learn PCorr by clustering only correlated channels.
\textbf{(iv) Heterogeneous Fusion and Prediction.} 
Finally, we fuse the representations after contrastive learning for positive and negative correlations in Heterogeneous Fusion module and generate new predictions. Then, both original and new predictions are gated and added together.

\subsection{Dynamic Correlation Estimation}
\label{Dynamic Correlation Estimation}

Channels exhibit both stable dependencies  that do not
change across time and fluctuations that change across time. Motivated by this, we introduce an innovative method that decomposes the \textcolor{black}{learnable part of} correlation matrix $\boldsymbol{M}^{corr}_t \in\mathbb{R}^{N\times N}$ at time $t$ into two low-rank components: Time-Varying $\boldsymbol{Q}_t\in\mathbb{R}^{N\times M}$ and Time-Invariant $\boldsymbol{V}\in\mathbb{R}^{M\times M}$, which can separate distinct correlation components, as illustrated in Figure~\ref{DCorr}. \textcolor{black}{Here, $\boldsymbol{R}\in \mathbb{R}^{N\times N}$ denotes the rule-based correlation matrix which is added to the learnable part to incorporate more prior knowledge for enhancing correlation estimation. $M$ is the hyperparameter for the post-decomposition rank, with $M<N$.
This decomposition of the learnable part offers greater parameter efficiency, yet remains functionally equivalent to conventional additive decomposition \citep{EnhanceNet}, as formally proven in Theorem~\ref{theorem: 1}}.

\begin{wrapfigure}{r}{0.5\textwidth} 
    \vspace{-8pt}  
  \centering
  \includegraphics[width=\linewidth]{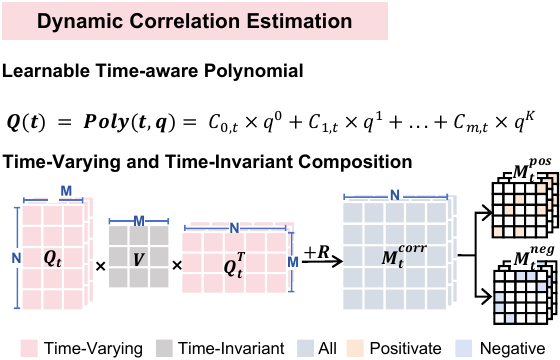}
      \captionsetup{font={color=black}}
  \caption{The details of DCorr Estimation}
  \label{DCorr}
\end{wrapfigure}

We estimate the two components separately and then compose them back into the original correlation. The Time-Varying component represents the fluctuations in correlations across time. As time series data inherently have trends and periodic characteristics, the correlations that measure their dependencies also exhibit such variations across the entire time series
~\citep{tpgnn}. 
Thus, we propose Learnable Time-aware Polynomials to estimate the changes, as polynomials can be effective in modeling temporal patterns by sharing a common basis across different time steps.
Based on a global adaptive method, the Time-Invariant component aims to capture the stable dependencies among channels that do not change over time. Finally, we compute the correlation matrix $\boldsymbol{M}^{corr}_t$ by composing learnable correlations and combining with the rule-based correlation $\boldsymbol{R}$.  This correlation matrix \textcolor{black}{is} then used for H-PCorr Contrastive Learning.


\subsubsection{Learnable Time-aware Polynomials}
Most existing approaches~\citep{Ada-MSHyper,EnhanceNet, zhao2023multiple} struggle to accurately express the time-varying characteristics of DCorr due to the lack of explicit modeling of dynamic regularities. 

{
\color{black}
In a stationary time series, we can use a well-behaved mathematical function to effectively approximate the fluctuations of the correlation. Considering that high-order polynomials provide better non-linear capacity than first-order ones, we use learnable polynomials to estimate $\boldsymbol{Q}_t$. The proof of this approximation capability is detailed in Theorem \ref{theorem: 2}.
}

We construct a $K$-order Time-aware Polynomials with a shared matrix basis:
\begin{align}
    \label{3}
    \boldsymbol{Q}_t &= \sum_{i=0}^{K} C_{i,t} \boldsymbol{q}^i,~
    (\boldsymbol{q}^{i} = \underbrace{\boldsymbol{q} \odot \boldsymbol{q} \odot \cdots \odot \boldsymbol{q}}_{i \text{ times}})~\textcolor{black}{,}
\end{align}
where $\boldsymbol{Q}_t\in \mathbb{R}^{N\times M}$ denote the Time-Varying component at time step $t$. $C_{i,t}\in \mathbb{R}^N$ is the \textcolor{black}{$i$}-th coefficient that varies over time, while $\boldsymbol{q}\in \mathbb{R}^{N\times M}$  is the globally learnable basis, which represents the pattern of changes over time. \textcolor{black}{We define $\boldsymbol{q}^i$ as the $i$-times Hadamard product of the matrix $\boldsymbol{q}$, where the operation $\odot$ is the element-wise Hadamard product.}


For convenience, we define the \textcolor{black}{collection} of $C_{i,t}$ as the matrix $\boldsymbol{\mathcal{C}}_t =(C_{0,t},\cdots C_{K,t}) \in \mathbb{R}^{N\times K}$. 
It is the dependency coefficient of each channel for pattern $q^i$ and exhibits different values at different times, determined by specific data. \textcolor{black}{Therefore}, we learn the mapping $f$ between the representations of time series $\boldsymbol{\tilde{X}}_t$  and coefficients $\boldsymbol{\mathcal{C}}_t$ to estimate it~\textcolor{black} {:}
\begin{align}
    \boldsymbol{\mathcal{C}}_t &= f(\boldsymbol{\tilde{X}}_t) \in \mathbb{R}^{N \times K}~\textcolor{black}{.}
\end{align}
Since only the polynomial coefficients need to be estimated with $f$, rather than the entire varying component, we can use a simple MLP to implement it.

\subsubsection{Time-Varying and Time-Invariant Composition}
Since the time-invariant part does not change over time, it should be globally unique.  Inspired by self-learned graphs~\citep{Ada-MSHyper,MTGNN}, we use global learnable vectors to capture the implicit stable dependencies among channels:
\begin{align}
    \boldsymbol{V} &= \text{Sigmoid}(\text{ReLU}(\boldsymbol{E}_1\boldsymbol{E}_2^T))~\textcolor{black}{,}
\end{align}
\textcolor{black}{where} $\boldsymbol{V}\in \mathbb{R}^{M\times M}$ denote the Time-Invariant component of DCorr. $\boldsymbol{E}_1, \boldsymbol{E}_2\in \mathbb{R}^{M\times d_e}$ are learnable vectors, $d_e$ is used to expand the dimensions, thereby enhancing the representation capacity.

As the statistics-based Pearson coefficient can describe simple linear correlation, we use it as the initialization for the final DCorr and build upon it to learn more complex correlations. The Pearson coefficient is calculated as follows:
\begin{align}
\label{eq: 5}
    r_{x,y} = \frac{\sum_{i=1}^L (x_i-\Bar{x})(y_i-\Bar{y})}{\sqrt{\sum_{i=1}^L (x_i-\Bar{x})^2 }\sqrt{\sum_{i=1}^L(y_i-\Bar{y})^2}} \textcolor{black}{,}
\end{align}
where $x,y$ are two variables in $\boldsymbol{X}_t$; $r_{x,y}$ denotes the correlation coefficient among them, while $\Bar{x}$ and $\Bar{y}$ indicate the mean values of $x$ and $y$, respectively, \textcolor{black}{$L$} is the size of input series. We use $\boldsymbol{R}\in \mathbb{R}^{N\times N}$ to denote the \textcolor{black}{collection} of $r$.
Overall, our DCorr includes rule-based correlation $\boldsymbol{R}$, time-varying components $\boldsymbol{Q}_t$, and time-invariant components $\boldsymbol{V}$.
\textcolor{black}{The final estimated correlation is formulated as the sum of the learnable and rule-based parts in the following equation:}
\begin{align}
\color{black}
    \label{12}
    \boldsymbol{M}_t^{\text{corr}} = \boldsymbol{R} + \boldsymbol{Q}_t \boldsymbol{V} {\boldsymbol{Q}_t}^T .
\end{align}

\subsection{Heterogeneous Correlation Division}
\label{HD}

Positive and negative correlations affect the channels differently, and the proportion of their contribution also varies among different channels. Motivated by Squeeze-and-Excitation~\citep{hu2018squeeze} (SE), we propose a channel-aware projector that adjusts the weights of channels based on contextual information during projection to better project the representations into positive and negative \textcolor{black}{spaces} and learn Heterogeneous Correlation (HCorr). 

\label{Heterogeneous Projection}
To distinguish the dependency of channels on Heterogeneous Correlation, we project the representations into two spaces. Specifically, we use SE mechanism to aggregate contextual information over time and adaptively calculate projection weights among channels.
The channel-aware projection layer $\mathcal{P}$ with $\boldsymbol{\tilde{X}}_t^{\text{in}}$ and $\boldsymbol{\tilde{X}}_t^\text{out}$ is shown as follows:
\begin{gather}
    \label{17}
\boldsymbol{\tilde{X}}_t^{\text{proj}} = \text{MLP}_1(\text{LayerNorm}(\boldsymbol{\tilde{X}}_t^{\text{in}})) \in \mathbb{R}^{P\times N \times d}~\textcolor{black}{,}\\
    \boldsymbol{W} = \text{SoftMax}(\text{MLP}_2({\text{LayerNorm}(\boldsymbol{\tilde{X}}_t^{\text{in}})})) \in \mathbb{R}^{N}~\textcolor{black}{,}\\
    \label{20}
    \boldsymbol{\tilde{X}}_t^\text{out} = \boldsymbol{\tilde{X}}_t^\text{in} + \boldsymbol{\tilde{X}}_t^{\text{proj}} \odot \text{expand}(\boldsymbol{W})~,
\end{gather}
\textcolor{black}{where LayerNorm} refers to the layer normalization operation, which enhances the model's generalisation ability; $\text{MLP}_1 := \mathbb{R}^{d}\to \mathbb{R}^{d}$ is used for preliminary projection; $\text{MLP}_2 := \mathbb{R}^{P \times d}\to \mathbb{R}^{1}$ is used to compute channel weights; $P$ denotes the number of patches; $\boldsymbol{W}$ denotes the adaptive channel weight; $\text{expand}(\cdot)$ denotes the dimension expansion operation.

We perform two identical projection transformations on $\boldsymbol{\tilde{X}}_t$ to obtain the representations of the positive and negative latent spaces\textcolor{black} {:}
\begin{align}
\boldsymbol{\tilde{X}}_t^{\text{pos}} = \mathcal{P}_1(\boldsymbol{\tilde{X}}_t) \in \mathbb{R}^{(P \times N \times d)},~\boldsymbol{\tilde{X}}_t^{\text{neg}}= \mathcal{P} _2(\boldsymbol{\tilde{X}}_t) \in \mathbb{R}^{(P \times N \times d)} ~\textcolor{black}{,}
\end{align}
where $\mathcal{P}_1$ and $\mathcal{P}_2$ are the proposed channel-aware projection operations with distinct learned parameters, and they are implemented as multi-layer architectures of $\mathcal{P}$.
$\boldsymbol{\tilde{X}}_t^{\text{pos}}$ and $\boldsymbol{\tilde{X}}_t^{\text{neg}}$ are representations projected into spaces of positive and negative correlations, respectively. They contain channel information with adaptive adjustments and will subsequently be used for contrastive learning.

It is noteworthy that the Heterogeneous Correlation Division module cannot directly accomplish the disentanglement of heterogeneous correlations. Instead, this separation is achieved under the guidance of the contrastive learning framework detailed in the next section.

\subsection{Heterogeneous Partial Correlation Contrastive Learning}
\label{Partial Contrastive Learning}
To capture Partial Correlation, we design Partial Contrastive Learning, which uses the correlation matrix derived from Dynamic Correlation Estimation (DCE) 
and representations from Heterogeneous Correlation Division (HD) to enable adaptive correlation learning. 


We leverage Contrastive Learning's advantages for clustering to capture PCorr~\citep{yang2023lightpath}.  Compared to existing methods~\citep{chen2024similarity,qiu2025duet}, this approach facilitates the fine-grained interaction among channels. Moreover, it does not add an extra burden during inference.

First based on the estimated correlation $\boldsymbol{M}_t^{corr}$, we define the heterogeneous correlations as $\boldsymbol{M}_t^{\text{pos}}$ and $\boldsymbol{M}_t^{\text{neg}}$ to decouple the complex interactions among channels:
\begin{align}
     \boldsymbol{M}_t^{\text{pos}} = \begin{cases}
   m_t^{\text{corr}}, & \text{if } corr > \epsilon \\
    0, & else
    \end{cases},~~
    \boldsymbol{M}_t^{\text{neg}} = \begin{cases}
   m_t^{\text{corr}}, & \text{if } corr < -\epsilon \\
    0, & else
    \end{cases} ~\textcolor{black}{,}
\end{align}
where $m_t^{\text{corr}}$ is the element of $\boldsymbol{M}_t^{\text{corr}}$, $\epsilon$ is the learnable threshold. 
The following process is for the positive correlation in the positive latent space; the same operation is performed on the negative latent spaces.

The matrix $\boldsymbol{M}^{\text{pos}}_t$ is used to select positive and negative samples for each channel. In the designed contrastive learning if $\boldsymbol{M}_t^{\text{pos}}[i,j]=0$, it is considered a negative pair; otherwise, it is considered a positive pair. 
The loss for the positive correlation can be expressed as follows:
\begin{align}
\label{loss}
    \mathcal{L}_\text{pos} = -\frac{1}{N}\sum_{i=1}^Nlog(\frac{\sum_{j=1}^N \boldsymbol{M}_t^{\text{pos}}[i,j] 
 \text{exp}(\text{sim}(\boldsymbol{\tilde{X}}_t^\text{pos}[i],\boldsymbol{\tilde{X}}_t^\text{pos}[j])/\tau)}{\sum_{k=1}^N  \text{exp}(\text{sim}(\boldsymbol{\tilde{X}}_t^\text{pos}[i],\boldsymbol{\tilde{X}}_t^\text{pos}[k])/\tau)}) \textcolor{black}{,}
\end{align}
where $\text{sim}(\cdot)$ represents the cosine similarity, and $\tau$ is the temperature coefficient used to control the degree of contrastive learning constraints. We do this for negative correlation in a similar way. The following equation gives the total auxiliary loss:
\begin{align}
    \mathcal{L}_\text{aux} = \mathcal{L}_\text{pos} + \mathcal{L}_\text{neg} .
\end{align}

\subsection{Heterogeneous Fusion and prediction}
\label{Heterogeneous Fusion and prediction}
Finally, we project the representations of the two heterogeneous latent spaces into a shared space and then fuse them to perform prediction.  Considering that some channels may require more correlation interaction while others may require more independence, we conduct a convex combination\textcolor{black} {:}
\begin{align}
    \boldsymbol{\tilde{X}}_t^\text{pos} = \mathcal{P}_3(\boldsymbol{\tilde{X}}_t^\text{pos}),&~~
    \boldsymbol{\tilde{X}}_t^\text{neg} = \mathcal{P}_4(\boldsymbol{\tilde{X}}_t^\text{neg}), \\
    \hat{\boldsymbol{Y}}_t^* = \beta~ \text{Linear}(\boldsymbol{\tilde{X}}_t^\text{pos} & + \boldsymbol{\tilde{X}}_t^\text{neg})+(1-\beta)~\hat{\boldsymbol{Y}}_{t}~\textcolor{black}{,}
\end{align}
where $\mathcal{P}_3$ and $\mathcal{P}_4$  consist of $l_2$ projection layers like $\mathcal{P}$, as given by equations (\ref{17}-\ref{20}). $\text{Linear}$ represents the linear prediction head, $\beta\in [0,1]^{N} $ is a learning parameter denotes the gated weight.

\subsection{Complexity Analysis}
The computational complexities are $\mathcal{O}(N^2)$ for the DCorr Estimation (DCE, Section \ref{Dynamic Correlation Estimation}) and H-PCorr Contrastive learning (HPCL, Section \ref{Partial Contrastive Learning}), and $\mathcal{O}(N)$ for HCorr Division (HD, Section \ref{HD}).
Most of the complexity arises from DCE and HPCL, which are only required during training. In the inference phase, since CoRA only includes HD modules, the time complexity is $ \mathcal{O}(N) $.

Figure~\ref{Effciency} shows CoRA imposes only minimal additional time on TSFMs, during fine-tuning and inference.
The details of complexity analysis are included in Appendix~\ref{Complexity Analyses}.
\section{Theoretical Analysis}
\subsection{The significance of Time-Varying and Time-Invariant Composition}
{
\color{black}
A straightforward approach to modeling dynamic correlations is to decompose the correlation matrix into the sum of a time-varying matrix and a time-invariant matrix \citep{EnhanceNet, DBLP:conf/ijcai/WuPLJZ19}.
However, this approach has parameter complexity. Our method can reduce the complexity while achieving the same effect.
}
{
\color{black}
\begin{theorem}
\label{theorem: 1}

When the time series is locally stationary, the Time-Varying and Time-Invariant Decomposition allows $\boldsymbol{Q}_t\boldsymbol{V}\boldsymbol{Q}_t^T$ to contain both time-varying and time-invariant information, like conventional additive decomposition.  

Specifically, $\boldsymbol{Q}_t\boldsymbol{V}\boldsymbol{Q}_t^T$ can be expressed as the sum of a time-invariant matrix $\boldsymbol{M}_i$ and a time-varying matrix $\boldsymbol{M}_v$, as shown below:
\begin{align}
    \boldsymbol{Q}_t\boldsymbol{V}\boldsymbol{Q}_t^T = \boldsymbol{M}_{i} + \boldsymbol{M}_{v}~\textcolor{black}{.}
\end{align}

\end{theorem}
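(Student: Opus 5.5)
The plan is to split $\boldsymbol{Q}_t$ into a time-invariant mean part and a time-varying fluctuation, and then expand the quadratic form $\boldsymbol{Q}_t\boldsymbol{V}\boldsymbol{Q}_t^T$. Local stationarity is what makes the split meaningful. Over the window of interest, the coefficient process $\boldsymbol{\mathcal{C}}_t=f(\boldsymbol{\tilde{X}}_t)$ has a well-defined, time-independent mean $\bar{C}_i=\mathbb{E}[C_{i,t}]$ for each order $i$, because the representation $\boldsymbol{\tilde{X}}_t$ inherits local stationarity from the input. Using the polynomial form (\ref{3}), I would write
\begin{align*}
\boldsymbol{Q}_t=\bar{\boldsymbol{Q}}+\Delta\boldsymbol{Q}_t,\qquad \bar{\boldsymbol{Q}}=\sum_{i=0}^K \bar{C}_i\boldsymbol{q}^i,\qquad \Delta\boldsymbol{Q}_t=\sum_{i=0}^K (C_{i,t}-\bar{C}_i)\boldsymbol{q}^i .
\end{align*}
Here $\bar{\boldsymbol{Q}}$ is time-invariant, since both the basis $\boldsymbol{q}$ and the means $\bar{C}_i$ are fixed. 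The fluctuation $\Delta\boldsymbol{Q}_t$ carries all of the temporal variation and has zero mean.

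Next I would substitute this split into the composition. Since $\boldsymbol{V}$ is global, expanding bilinearly gives
\begin{align*}
\boldsymbol{Q}_t\boldsymbol{V}\boldsymbol{Q}_t^T=\bar{\boldsymbol{Q}}\boldsymbol{V}\bar{\boldsymbol{Q}}^T+\bar{\boldsymbol{Q}}\boldsymbol{V}\Delta\boldsymbol{Q}_t^T+\Delta\boldsymbol{Q}_t\boldsymbol{V}\bar{\boldsymbol{Q}}^T+\Delta\boldsymbol{Q}_t\boldsymbol{V}\Delta\boldsymbol{Q}_t^T .
\end{align*}
I would then set $\boldsymbol{M}_i=\bar{\boldsymbol{Q}}\boldsymbol{V}\bar{\boldsymbol{Q}}^T$, which depends on no $t$-indexed quantity, and let $\boldsymbol{M}_v$ be the sum of the remaining three terms. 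This already gives the identity claimed in Theorem~\ref{theorem: 1}.

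To show the split is non-degenerate, I would add a short argument that $\boldsymbol{M}_v$ genuinely varies with $t$ and has no constant component hidden in it. The cross terms are linear in $\Delta\boldsymbol{Q}_t$, so they have zero mean. The quadratic term has a constant expectation, which I would absorb into $\boldsymbol{M}_i$ so that $\mathbb{E}[\boldsymbol{M}_v]=0$. Its expectation can be written $\sum_{i,j}\mathrm{Cov}(C_{i,t},C_{j,t})$-weighted combinations of $\boldsymbol{q}^i\boldsymbol{V}(\boldsymbol{q}^j)^T$, and it is constant because the covariance is time-invariant under (local) stationarity. This parallels the conventional additive decomposition $\boldsymbol{M}_{\text{static}}+\boldsymbol{M}_{\text{dyn},t}$. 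I would also note the parameter count: $\boldsymbol{q}$ and $\boldsymbol{E}_1,\boldsymbol{E}_2$ require $O(NM+Md_e)$ parameters, versus $O(N^2)$ for a free additive static matrix.

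I expect the main obstacle to be making ``locally stationary'' precise enough to justify time-independent first and second moments of $C_{i,t}$. The difficulty is that $f$ is a learned nonlinear map applied to TSFM representations. I would handle this by assuming, as part of the hypothesis, that $\boldsymbol{\tilde{X}}_t$ is (weakly) stationary within the window and that $f$ is measurable with finite second moments. Stationarity of $\boldsymbol{\mathcal{C}}_t$ then follows because a time-invariant map of a stationary process is stationary.
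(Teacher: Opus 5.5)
Your proposal is correct and follows essentially the same route as the paper: split $\boldsymbol{Q}_t$ into a mean part plus a residual, expand $\boldsymbol{Q}_t\boldsymbol{V}\boldsymbol{Q}_t^T$ bilinearly, and take $\boldsymbol{M}_i$ to be the mean--mean term and $\boldsymbol{M}_v$ the three remaining terms; moving $\mathbb{E}[\Delta\boldsymbol{Q}_t\boldsymbol{V}\Delta\boldsymbol{Q}_t^T]$ into $\boldsymbol{M}_i$ is a small refinement the paper does not make. One caveat on your side justification: a nonlinear map $f$ does not preserve weak stationarity, so to get time-invariant first and second moments of $\boldsymbol{\mathcal{C}}_t$ you should assume $\boldsymbol{\tilde{X}}_t$ is strictly stationary within the window, and that $f(\boldsymbol{\tilde{X}}_t)$ has finite second moments.
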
}
{
\color{black}
This indicates that our decomposition approach remains functionally equivalent to conventional additive decomposition. Notably, the expression $\boldsymbol{Q}_t\boldsymbol{V}\boldsymbol{Q}_t^T$ is the learnable component  of the correlation matrix $\boldsymbol{M}_t^{corr}$, as defined in Equation \ref{12}.
}


\subsection{The fitting ability of Time-aware Polynomials}
Time-aware polynomials can model complex time-varying correlation relationships, and the error bound decreases as the degree $K$ of the polynomial increases.
{
 \color{black}
\begin{theorem}
\label{theorem: 2} When the time series is locally stationary, we can approximate the underlying correlation matrix with a high-order polynomial. 

\textcolor{black}{Specifically,} assuming that the correlation is a smooth function of the basis $\boldsymbol{q}$, the true correlation component $\boldsymbol{Q_t}^*$ can be expressed as $ \mathcal{F}(\boldsymbol{q}) $. The error bound can be formalized as follows.
\begin{align}
 |\boldsymbol{Q_t}^* - \boldsymbol{Q_t}| = \frac{\mathcal{F}^{(K+1)}(\boldsymbol{\xi})}{(K+1)!}\boldsymbol{q}^{(K+1)}, \boldsymbol{\xi}\in[-|\boldsymbol{q}|,|\boldsymbol{q}|]~\textcolor{black}{.}
\end{align}
\end{theorem}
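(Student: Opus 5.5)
The plan is to read the statement as a Taylor-expansion result, applied entrywise. Every operation in $\boldsymbol{Q}_t=\sum_{i=0}^{K} C_{i,t}\boldsymbol{q}^i$ is elementwise: the powers are Hadamard powers, and the coefficient $C_{i,t}\in\mathbb{R}^N$ acts row-wise. So the matrix claim reduces to a scalar claim about each entry $(n,m)$. Fix one channel $n$ and one column $m$, and write $x=\boldsymbol{q}[n,m]$. The hypotheses of local stationarity and smoothness are taken to mean the following. On the local window around time $t$, the true time-varying entry $\boldsymbol{Q}_t^*[n,m]$ is a fixed smooth function $\mathcal{F}_{n}$ of the shared basis value $x$. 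This function is $K+1$ times continuously differentiable on $[-|x|,|x|]$. Its dependence on $t$ enters only through its Taylor coefficients, and these are exactly what the MLP $f$ is asked to produce as $\boldsymbol{\mathcal{C}}_t$.

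The steps are as follows.

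\textbf{Step 1.} Expand $\mathcal{F}_n$ in a Taylor series about $0$ up to order $K$:
\begin{align}
\mathcal{F}_n(x)=\sum_{i=0}^{K}\frac{\mathcal{F}_n^{(i)}(0)}{i!}x^i+R_K(x).
\end{align}

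\textbf{Step 2.} Identify the polynomial part with the model by choosing $C_{i,t}[n]=\mathcal{F}_n^{(i)}(0)/i!$. This choice is admissible because $f$ is free to output any coefficient vector. With it, $\boldsymbol{Q}_t[n,m]$ equals the degree-$K$ Taylor polynomial exactly.

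\textbf{Step 3.} Apply the Lagrange form of the remainder. For each entry there is some $\xi$ between $0$ and $x$, hence $\xi\in[-|x|,|x|]$, with
\begin{align}
R_K(x)=\frac{\mathcal{F}_n^{(K+1)}(\xi)}{(K+1)!}x^{K+1}.
\end{align}

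\textbf{Step 4.} Reassemble the entries. Collect the entrywise $\xi$ values into a matrix $\boldsymbol{\xi}$ and note that $x^{K+1}$ is the $(n,m)$ entry of the Hadamard power $\boldsymbol{q}^{K+1}$. Taking absolute values then gives the stated identity, read elementwise. Finally, since $|\xi|\le|x|$, the error is bounded by $\sup|\mathcal{F}^{(K+1)}|\,|\boldsymbol{q}|^{K+1}/(K+1)!$. If $\|\boldsymbol{q}\|_\infty\le 1$ (for instance after normalization) and the derivatives grow at most geometrically, this bound tends to $0$ as $K$ increases, which is the qualitative claim that the error shrinks with the degree.

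The main obstacle is modelling, not calculus. The statement's "$\boldsymbol{Q}_t^*=\mathcal{F}(\boldsymbol{q})$" must be made precise in a way that is compatible with a single shared basis $\boldsymbol{q}$ and coefficients that vary only by row. A different smooth function for each $(n,m)$ entry would not be representable, because $C_{i,t}$ does not depend on $m$. I will therefore state the assumption as one smooth function per channel row, applied entrywise and possibly varying with $t$ within the locally stationary window. I will also note that local stationarity is what justifies holding $\boldsymbol{q}$ fixed over the window while only $\boldsymbol{\mathcal{C}}_t$ adapts. Once this is fixed, the remaining argument is the standard one-variable Taylor theorem applied entrywise.
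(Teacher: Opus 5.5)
Your proposal is correct and is essentially the paper's argument: an entrywise Maclaurin expansion of $\mathcal{F}$ about $\boldsymbol{0}$, the choice $C_{i,t}=\mathcal{F}^{(i)}(\boldsymbol{0})/i!$, and the Lagrange remainder, which the paper re-derives via the Cauchy mean value theorem on auxiliary functions $\mathcal{H}$ and $\mathcal{G}$ rather than citing it, while your one-function-per-row modelling makes explicit what the paper leaves implicit. One looseness is shared with the paper: taking absolute values of the signed remainder identity gives $|\mathcal{F}^{(K+1)}(\boldsymbol{\xi})|\,|\boldsymbol{q}|^{K+1}/(K+1)!$ on the right, so the displayed identity holds with absolute values on both sides (or on neither), not on the left only as both you and the paper write it.
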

}

\textcolor{black}{
This indicates that by selecting an appropriate $K$, we can strike an effective balance between model effectiveness and computational efficiency, thereby enabling the efficient estimation of dynamic correlations.}
We provide the proof of Theorems 1-2 in the Appendix~\ref{Theoretical Analyses}.

\section{Experiment}
\subsection{Experimental Details}
\textbf{Datasets.}
To conduct comprehensive and fair comparisons for different models, we
conduct experiments on ten well-known forecasting benchmarks as the target datasets, including ETT (4 subsets), Electricity, Traffic, Solar, weather, AQShunyi and ZafNoo, which cover multiple domains. More details of the benchmark datasets are included in Table~\ref{Multivariate datasets} of Appendix~\ref{appendix DATASETS}. 

\textbf{Baselines and Implementation.}
We choose the latest state-of-the-art models to serve as baselines, including 3 Time Series LLM-based models (GPT4TS, AutoTimes, UniTime) and 3 Time Series pre-trained models (Moment, Chronos, Timer). We utilize the TSFM-Bench~\citep{foundts} code repository for unified evaluation. More implementation details are included in \ref{Implementation Details}. To keep consistent with previous works, we adopt Mean Squared Error (MSE) and Mean Absolute Error (MAE) as evaluation metrics. We
provide our code at \href{https://github.com/decisionintelligence/CoRA}{https://github.com/decisionintelligence/CoRA}.
\subsection{Main Results}
Comprehensive forecasting results of TSFMs with and without using CoRA are listed in Table \ref{main_result}. We have the following observations: 
i) Compared to fine-tuning without CoRA, fine-tuning with CoRA achieves better results in average results and results of different forecasting horizons (Table \ref{full results of llm-basaed models} and Table~\ref{full results of pre-trained models}, in Appendix \ref{Full results}) for both LLM-based models and time series pre-trained models, even in 5\% few-shot settings. ii) Sharing the same pre-trained parameters, TTM's Channel-Dependent (CD) and Channel-Independent (CI) versions differ only in their module configurations during downstream fine-tuning. We implement a CI version of TTM, fine-tuned with CoRA and compare it with a CD version which is fine-tuned without CoRA. The better performance of the former demonstrates that considering \textcolor{black}{the mentioned three types of} correlations allows the model to better understand the inter-channels interaction.

\begin{table}[h!]
\vspace{-3mm}
\centering
\setlength{\tabcolsep}{7pt}
\renewcommand{\arraystretch}{1.1} 
\setlength{\tabcolsep}{7pt}
\captionsetup{font={color=black}}
\caption{Multivariate forecasting results in the 5\% few-shot setting with MSE are averaged across four different forecasting horizons $H\in\{96, 192, 336, 720\}$.The better results are highlighted in \textbf{bold}. Full and MAE results are available in Appendix~\ref{Full results}.}
\resizebox{1\textwidth}{!}{

{
            \color{black}
            \arrayrulecolor{black}
            
\begin{tabular}{c|cc|cc|cc|cc|cc|cc}
\toprule
\multirow{2.5}{*}{\textbf{Model}} & \multicolumn{6}{c|}{\textbf{{LLM-based}}}
& \multicolumn{6}{c}{\textbf{{Pre-trained}}}
 \\
\cmidrule{2-13}
 & \multicolumn{2}{c|}{\textbf{{GPT4TS (2023)}}}& \multicolumn{2}{c|}{\textbf{{CALF (2025)}}}& \multicolumn{2}{c|}{\textbf{{UniTime (2024)}}} & \multicolumn{2}{c|}{\textbf{{Moment (2024)}}} & \multicolumn{2}{c|}{\textbf{{Timer (2024)}}} & \multicolumn{2}{c}
{\textbf{TTM (2024)}}  \\ \cmidrule{1-13}
~\textbf{Plugin}& \without & \with & \without& \with & \without& \with & \without& \with & \without &\with  & \without &\with \\\midrule

\textbf{ETTh1}       & 0.468 & \textbf{0.456} & 0.443 & \textbf{0.434} & 0.739 & \textbf{0.714} & 0.550 & \textbf{0.543} & 0.444 & \textbf{0.432} & 0.405 & \textbf{0.397} \\
\textbf{ETTh2}       & 0.375 & \textbf{0.362} & 0.373 & \textbf{0.367} & 0.399 & \textbf{0.386} & 0.370 & \textbf{0.365} & 0.356 & \textbf{0.350} & 0.343 & \textbf{0.340} \\
\textbf{ETTm1}       & 0.389 & \textbf{0.379} & 0.373 & \textbf{0.364} & 0.408 & \textbf{0.394} & 0.455 & \textbf{0.452} & 0.360 & \textbf{0.356} & 0.357 & \textbf{0.353} \\
\textbf{ETTm2}       & 0.279 & \textbf{0.269} & 0.272 & \textbf{0.264} & 0.292 & \textbf{0.281} & 0.279 & \textbf{0.275} & 0.262 & \textbf{0.257} & 0.259 & \textbf{0.254} \\
\textbf{Electricity} & 0.208 & \textbf{0.201} & 0.173 & \textbf{0.168} & 0.201 & \textbf{0.195} & 0.202 & \textbf{0.200} & 0.241 & \textbf{0.236} & 0.179 & \textbf{0.177} \\
\textbf{Traffic}     & 0.441 & \textbf{0.431} & 0.436 & \textbf{0.425} & 0.455 & \textbf{0.446} & 0.451 & \textbf{0.444} & 0.456 & \textbf{0.447} & 0.486 & \textbf{0.481} \\
\textbf{Solar}       & 0.256 & \textbf{0.245} & 0.230 & \textbf{0.224} & 0.253 & \textbf{0.248} & 0.227 & \textbf{0.223} & 0.217 & \textbf{0.213} & 0.270 & \textbf{0.266} \\
\textbf{Weather}     & 0.254 & \textbf{0.245} & 0.238 & \textbf{0.231} & 0.255 & \textbf{0.243} & 0.252 & \textbf{0.249} & 0.241 & \textbf{0.236} & 0.226 & \textbf{0.224} \\
\textbf{AQShunyi}    & 0.850 & \textbf{0.831} & 0.733 & \textbf{0.716} & 0.743 & \textbf{0.717} & 0.694 & \textbf{0.681} & 0.734 & \textbf{0.724} & 0.700 & \textbf{0.689} \\
\textbf{ZafNoo}      & 0.566 & \textbf{0.554} & 0.550 & \textbf{0.534} & 0.563 & \textbf{0.542} & 0.531 & \textbf{0.526} & 0.537 & \textbf{0.524} & 0.504 & \textbf{0.497} \\

 \bottomrule
    \end{tabular}}}
\label{main_result}

\vspace{-3mm}

\end{table}

\subsection{Comparison with Other Correlation Plugins}
To better validate the effectiveness of CoRA,  we compare it with LIFT~\cite{zhaorethinking} and C-LoRA~\cite{C-LoRA}. We select GPT4TS, UniTime and Timer as the backbone and set H to 96. As shown in Figure \ref{Comparison}, since LIFT and C-LoRA are not specifically designed for TSFMs, the limited training samples in the few-shot setting lead to a degradation in their performance, negatively impacting the effectiveness of the TSFMs. In contrast, CoRA, designed specifically for TSFMs, learns multiple correlations from the TSFMs' representations, allowing it to fully leverage their predictive capabilities. More comparisons with other fine-tuning setting are included in the Appendix \ref{Comparison with Other Correlation Plugins}.
\begin{figure*}[!h]
  \centering
  \begin{subfigure}[t]{0.32\textwidth}
    \centering 
    \includegraphics[width=\linewidth]{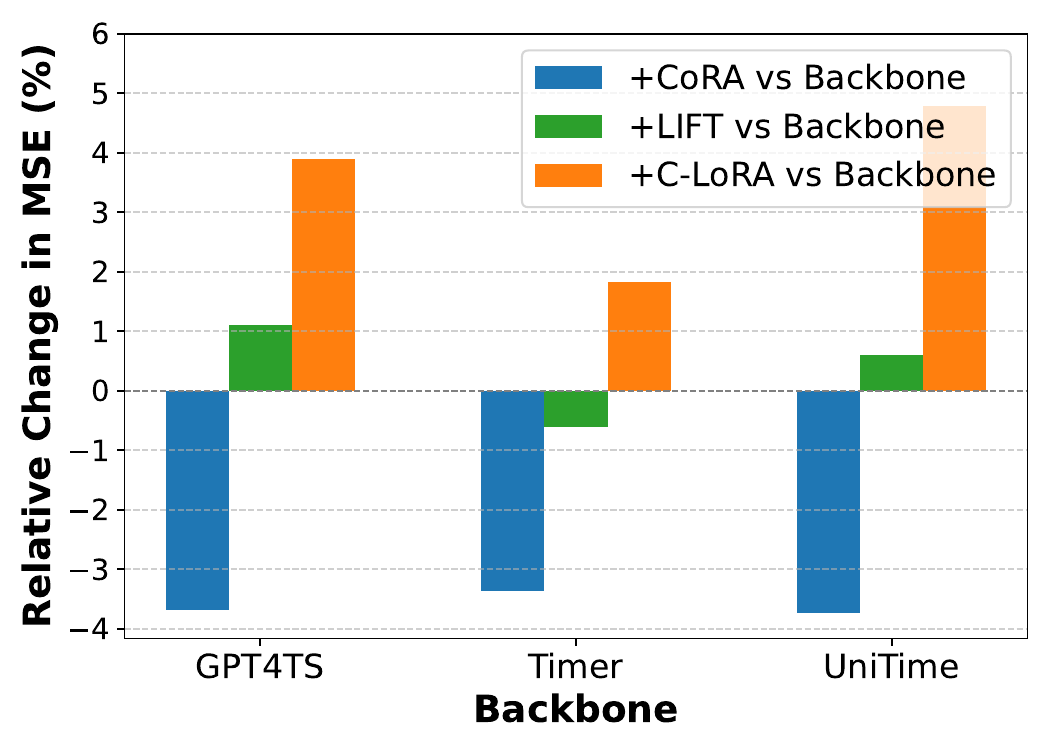} 
      
    \caption{ETTm2}

  \end{subfigure}
  \hspace{0.1mm}%
  \begin{subfigure}[t]{0.32\textwidth}
    \centering
    \includegraphics[width=\linewidth]{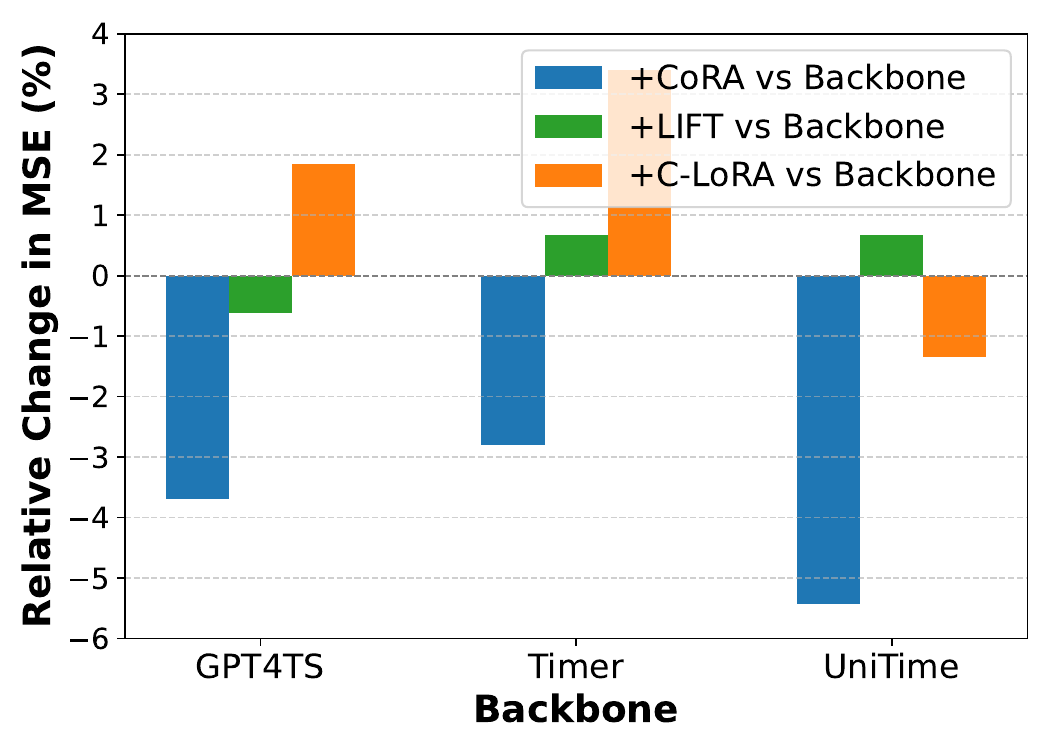}
    \caption{Weather}

  \end{subfigure}%
  \hspace{0.1mm}%
  \begin{subfigure}[t]{0.32\textwidth}
    \centering
    \includegraphics[width=\linewidth]{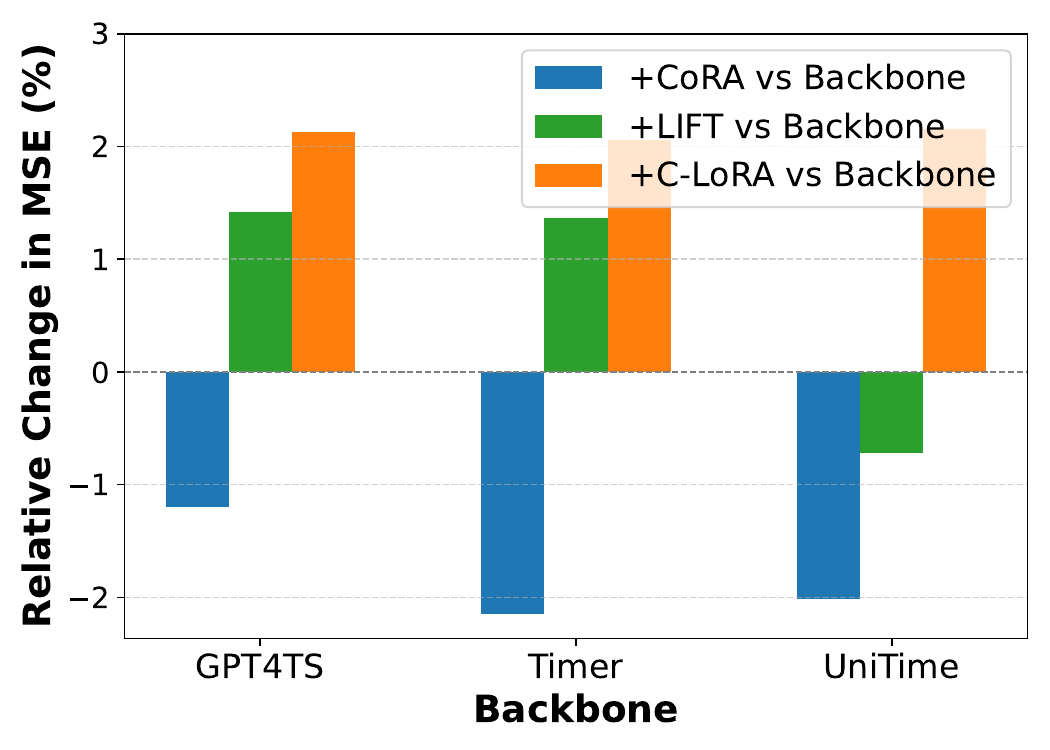}
    \caption{Electricity}

  \end{subfigure}
  \caption{Correlation Plugins comparison in 5\% few-shot fine-tuning setting.}
  \label{Comparison}
\end{figure*}

\subsection{Ablation Studies}

To investigate the effectiveness of CoRA, we conduct comprehensive experiments. In our work, the DCorr Estimation (DCE) is used to learn DCorr and generate the labels required for H-PCorr Contrastive Learning (HPCL), while HPCL utilizes these labels to guide the projectors in the Heterogeneous Division (HD) module. Therefore, they cannot operate independently. We utilize naive implementations in place of the original modules within certain variants.

\begin{table}[h!]
    \centering
    
    \captionsetup{font={color=black}}
    \caption{The MSE results of various variants.}
    \label{tab:ablation_studies}
    \begin{threeparttable}
        {
            \color{black}
            \arrayrulecolor{black}
            
            \setlength{\tabcolsep}{5pt}
            \renewcommand{\arraystretch}{1.1}
            
            \resizebox{1\columnwidth}{!}{
                \begin{tabular}{c c c c | c c c | c c c}
                \specialrule{0.8pt}{0pt}{2pt}
                \multicolumn{4}{c|}{\textbf{Dataset}} & \multicolumn{3}{c}{\textbf{ETTm2}} & \multicolumn{3}{|c}{\textbf{Electricity}}\\ \cmidrule{1-10}
                 & \makecell{\textbf{DCorr} \\ \textbf{Estimation}} &  \makecell{\textbf{HCorr}\\ \textbf{Dvision}} & \makecell{\textbf{H-PCorr}\\\textbf{Contrastive Learning}} & \textbf{GPT4TS} & \textbf{UniTime} & \textbf{Timer} &  \textbf{GPT4TS} & \textbf{UniTime} & \textbf{Timer} \\ \midrule
                
                1 &  \without  & \without & \without &0.279  & 0.292  & 0.262  & 0.208  & 0.201  & 0.241\\   \midrule
                2 &  \replace  & \replace & \with & 0.277  & 0.287  & 0.261  & 0.206  & 0.199  & 0.239 \\   \midrule
                3 &  \replace  & \with & \with & 0.274  & 0.285  & 0.260  & 0.204  & 0.197  & 0.239  \\   \midrule
                4 &  \with  & \replace & \with & 0.273  & 0.283  & 0.260  & 0.205  & 0.198  & 0.238  \\   \midrule
                5 &  \with  & \with & \with &\textbf{0.269} & \textbf{0.281}& \textbf{0.257} & \textbf{0.201} & \textbf{0.195} & \textbf{0.236}\\ \specialrule{0.8pt}{2pt}{2pt}
                \end{tabular}
            }
        }
        \begin{tablenotes}[flushleft] 
            \fontsize{8}{4}\selectfont
            \item \textcolor{black}{\without~denotes a module removed, \with~denotes a module added,~\replace~denotes replace a module with a naive implementation.}
        \end{tablenotes}
    \end{threeparttable}
\end{table}

Specifically, we replace the DCE module with a series-level Pearson correlation coefficient, which cannot model DCorr, and the HD module with a single-branch projection layer, which is unable to capture PCorr.
The comparison between Row 1-2 demonstrates the effectiveness of HPCL; however, its performance is limited due to its inability to capture multiple types of correlations. In Rows 2-4, the addition of either the DCE or HD module to HPCL further enhances performance, which confirms the efficacy of both modules. In Row 5, the combination of all three modules achieves the best performance.


\subsection{Model Analysis}
\textbf{Efficiency Analysis}
Our proposed CoRA, as a lightweight plugin for TSFMs, shows strong efficiency, particularly during the inference phase. Figure~\ref{Effciency} shows a comparative analysis of the efficiency of TSFMs with and without the application of CoRA.  We selected three datasets in ascending order of the number of channels: ETTm2 ($N=7$), Weather ($N=21$), and Electricity ($N=321$). For the experiments, both the look-back window $L$ and the forecasting horizon $H$ were set to 96.  Train time and Inference time refer to the duration of a single training epoch and the total time required to process all samples during inference, respectively.
The results show that, compared to the backbone itself, the use of CoRA does not introduce significant additional time or parameter numbers. Moreover, as the number of channels ($N$) increases  from 7, 21, to 321, CoRA maintains its efficiency without noticeable degradation compared to the backbone, particularly during inference.
\begin{figure*}[!htbp]
  \centering
  \vspace{-3mm}
  \begin{subfigure}[t]{0.32\textwidth}
    \centering 
    \includegraphics[width=\linewidth]{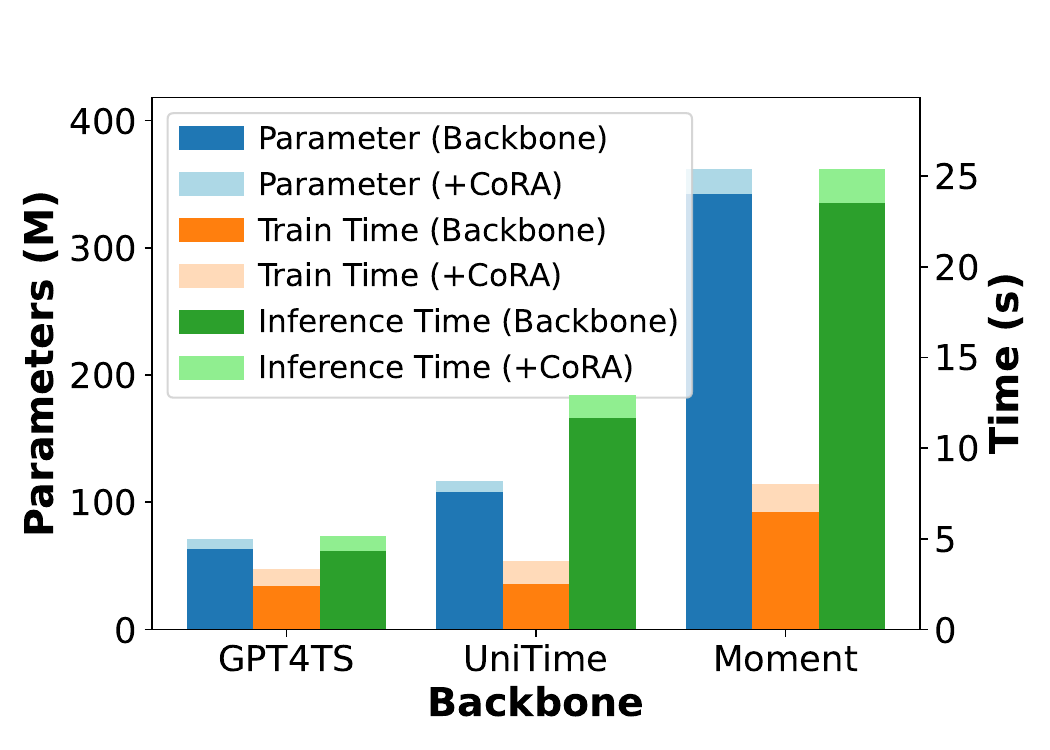} 
      
    \caption{ETTm2 ($N$=7)}

  \end{subfigure}
  \hspace{0.1mm}%
  \begin{subfigure}[t]{0.32\textwidth}
    \centering
    \includegraphics[width=\linewidth]{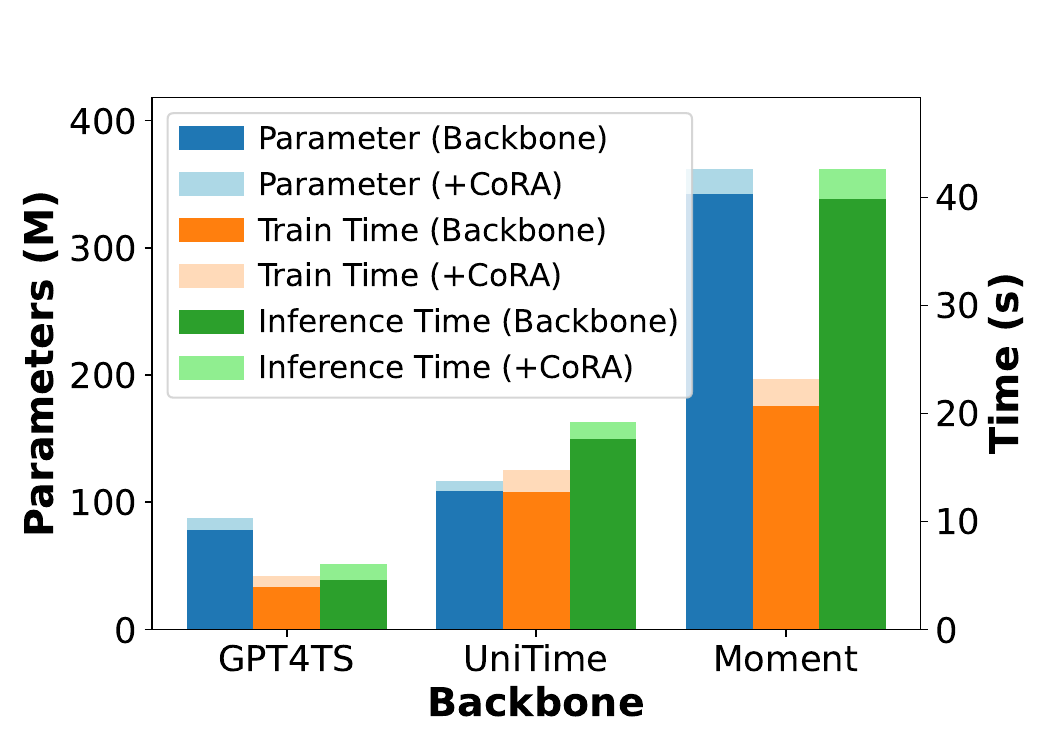}
    \caption{Weather ($N$=21)}
  \end{subfigure}%
  \hspace{0.1mm}%
  \begin{subfigure}[t]{0.32\textwidth}
    \centering
    \includegraphics[width=\linewidth]{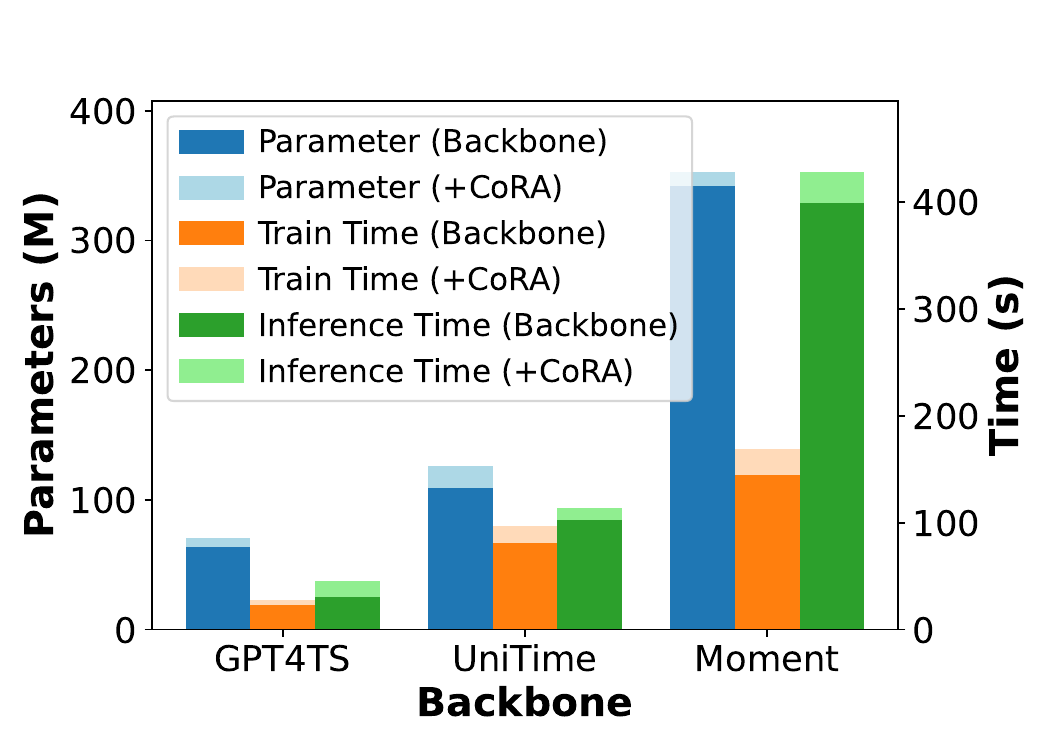}
    \caption{Electricity ($N$=321)}

  \end{subfigure}
  \caption{Efficiency Analysis of TSFMs with and without the application of CoRA. }
  \label{Effciency}
\end{figure*}

\begin{wraptable}{r}{0.65\textwidth} 
    \vspace{-5mm}
    \centering
    \captionsetup{font={color=black}}
    \caption{MSE results for different data percentage.}
    \label{tab:wrap_wide}
    \setlength{\tabcolsep}{5pt} 
    \small 
    \resizebox{0.65\textwidth}{!}
    {
        \color{black}
    \begin{tabular}{l |cccc| cccc}
        \toprule
         \textbf{Dataset} & \multicolumn{4}{c|}{\textbf{ETTm2}} & \multicolumn{4}{c}{\textbf{Weather}} \\
        \cmidrule{1-9}
        \textbf{Data} & \textbf{3\%} & \textbf{5\%} & \textbf{10\%} & \textbf{20\%} & \textbf{3\%} & \textbf{5\%} & \textbf{10\%} & \textbf{20\%} \\
        \midrule
        \textbf{TTM}             & \makecell{0.263 \\ \tiny{$\pm .005$}} & \makecell{0.259 \\ \tiny{$\pm .003$}} & \makecell{0.256 \\ \tiny{$\pm .003$}} & \makecell{0.250 \\ \tiny{$\pm .002$}} & \makecell{0.237 \\ \tiny{$\pm .003$}} & \makecell{0.226 \\ \tiny{$\pm .002$}} & \makecell{0.224 \\ \tiny{$\pm .002$}} & \makecell{0.216 \\ \tiny{$\pm .001$}} \\
        \textbf{+ CoRA} & \makecell{\textbf{0.261} \\ \tiny{$\pm .004$}} & \makecell{\textbf{0.254} \\ \tiny{$\pm .002$}} & \makecell{\textbf{0.248} \\ \tiny{$\pm .001$}} & \makecell{\textbf{0.245} \\ \tiny{$\pm .001$}} & \makecell{\textbf{0.234} \\ \tiny{$\pm .003$}} & \makecell{\textbf{0.224} \\ \tiny{$\pm .002$}} & \makecell{\textbf{0.212} \\ \tiny{$\pm .001$}} & \makecell{\textbf{0.210} \\ \tiny{$\pm .001$}} \\
        \midrule
        \textbf{CALF}            & \makecell{0.285 \\ \tiny{$\pm .005$}} & \makecell{0.272 \\ \tiny{$\pm .003$}} & \makecell{0.268 \\ \tiny{$\pm .004$}} & \makecell{0.261 \\ \tiny{$\pm .003$}} & \makecell{0.251 \\ \tiny{$\pm .004$}} & \makecell{0.238 \\ \tiny{$\pm .002$}} & \makecell{0.230 \\ \tiny{$\pm .003$}} & \makecell{0.224 \\ \tiny{$\pm .002$}} \\
        \textbf{+ CoRA} & \makecell{\textbf{0.283} \\ \tiny{$\pm .003$}} & \makecell{\textbf{0.264} \\ \tiny{$\pm .002$}} & \makecell{\textbf{0.260} \\ \tiny{$\pm .002$}} & \makecell{\textbf{0.254} \\ \tiny{$\pm .002$}} & \makecell{\textbf{0.248} \\ \tiny{$\pm .003$}} & \makecell{\textbf{0.231} \\ \tiny{$\pm .002$}} & \makecell{\textbf{0.223} \\ \tiny{$\pm .002$}} & \makecell{\textbf{0.219} \\ \tiny{$\pm .002$}} \\
        \bottomrule
    \end{tabular}}
\end{wraptable}

\textcolor{black}{\textbf{Data Analysis} }
\textcolor{black}{While the previous results focused on the 5\% fine-tuning setting, we expand the analysis to provide a more comprehensive view and to explicitly explore the impact of fine-tuning data volume on performance. Specifically, we fine-tune the TTM and CALF backbones on the ETTm2 and Weather datasets, using 3\%, 5\%, 10\%, and 20\% of the available training data.
The MSE results are summarized in the Table \ref{tab:wrap_wide}. As the results indicate, CoRA still yields a modest performance improvement even in a low-data regime using only 3\% of the data.
}

\textbf{Sensitivity Analysis and Visualization} The Data Sensitivity of CoRA in different few-shot setting are presented in Appendix~\ref{Comparison with Other Correlation Plugins}. The Prarmeter Sensitivity analyses for the polynomial's degree $K$, the decomposition size $M$, and the number of projection layers $l_1,l_2$ are presented in Appendix \ref{Hyperparameter Sensitivity}. The Visualization of heterogeneous spaces are presented in Appendix~\ref{Visualization}.

\section{Conclusion}
In this paper, we propose a lightweight Correlation-aware Adapter (CoRA) that enhances the predictive performance of Time Series Foundation Models (TSFMs) by considering the mentioned three types of correlation relationships.
We combine Time-Varying and Time-Invariant Composition with Learnable Time-aware Polynomials to enable lightweight modelling of dynamic correlations exhibiting regular patterns. For Heterogeneous Correlation and Partial Correlation, we design Heterogeneous-Partial Contrastive Learning. This approach decouples Heterogeneous Correlation while simultaneously achieving adaptive Partial Correlation learning among channels. 
Comprehensive experiments on real-world datasets demonstrate that CoRA can improve the forecast performance of TSFMs.

\clearpage

\section*{Acknowledgements}
This work was partially supported by the National Natural Science Foundation of China (No. 62372179, No. 62472174) and the Fundamental Research Funds for the Central Universities. Chenjuan Guo is the corresponding author of the work.

\section*{Ethics statement}
Our work exclusively uses publicly available benchmark datasets that contain no personally identifiable information. The proposed adapter for Time Series Foundation Models in Multivariate Time Series Forecasting is designed for beneficial applications in system reliability and safety monitoring. No human subjects were involved in this research.

\section*{Reproducibility statement}
The performance of CoRA and the datasets used in our work are real, and all experimental results can be reproduced. We have released our model code at:
\href{https://github.com/decisionintelligence/CoRA}{https://github.com/decisionintelligence/CoRA}.
\bibliography{reference}
\bibliographystyle{iclr2026_conference}

\clearpage
\appendix
\section{Definitions of the Three Correlations}
\label{app: definitions}
\textbf{Definition 1} (\textit{Correlation Martix})
Given a multivariate time series $X \in \mathbb{R}^{N \times L}$, where $N$ represents the number of channels and $L$ the temporal length, the series is partitioned into a set of $K = \lfloor L/T \rfloor$ non-overlapping segments using a window of length $T$. For the $k$-th segment, we define the channel-wise correlation matrix as $C^{(k)} \in \mathbb{R}^{N \times N}$, where $k \in [1, K]$. The element at the i-th row and j-th column of this matrix is denoted by $C^{(k)}_{ij}$ which represents the correlation between the i-th channel and the j-th channel.

\textbf{Definition 2} (\textit{Dynamic Correlation})
The \textit{Dynamic Correlation} refers to the case where the correlation martix $C^{(k)} (k \in [1, K])$ is not constant over the segments index $k$. Formally, this means there exist at least two distinct segment indices $m, n \in [1, K]$ with $m \neq n$, such that their corresponding correlation matrices are unequal: $C^{(m)} \neq C^{(n)}$.

\textbf{Definition 3} (\textit{Heterogeneous Correlation})
The \textit{Heterogeneous Correlation} refers to the case where there exists at least one temporal segment $k \in [1, K]$ in which some channels exhibit both positive and negative correlations with other channels. Formally, this means that for at least one correlation matrix $C^{(k)} (k \in [1, K])$, there exist at least three distinct channel indices $a, b, c \in [1, N]$, such that the corresponding correlation matrix elements $c_{ab}^{(k)}$ and $c_{ac}^{(k)}$ have opposite signs: $c_{ab}^{(k)} \cdot  c_{ac}^{(k)} < 0$.

\textbf{Definition 4} (\textit{Partial Correlation})
The \textit{Partial Correlation} refers to the case where, within at least one temporal segment $k \in [1, K]$, there exists some pairs of channels with a non-significant relationship. Formally, given a predefined significance threshold $\epsilon$, this means there exists at least one matrix $C^{(k)} (k \in [1, K])$ and at least one pair of distinct channel indices $a, b \in [1, N]$, such that: $|c_{ab}^{(k)}|< \epsilon$.

\section{Complexity Analyses}
\label{Complexity Analyses}

\subsection{Training Phase}
\textbf{Dynamic Correlation Estimation} This module consits of Learnable Time-aware Polynomials (LTP) and Time-Varying and Time-Invariant (T-T) Composition. The LTP have a computational complexity of $\mathcal{O}(PKNM+PNd^2)$ due to the polynomial operations and the MLP used to generate $\boldsymbol{\mathcal{C}}_t$, and a space complexity of $\mathcal{O}(Kd+MN)$ because of the basis $\boldsymbol{q}$ and the MLP used to generate $\boldsymbol{\mathcal{C}}_t$. Where $P$ is the number of patches, $N$ denotes the number of channels, $M$ is the second dimension of $\boldsymbol{Q}_t$ ,$K$ is  the degree of the polynomial and $d$ is the dimension of representations. The T-T Composition has a computational complexity of $\mathcal{O}(lN^2 + NM^2+MN^2)$ due to the calculation of the Pearson coefficient and the composition in Equation~\ref{12}. Where $l$ denotes the patch size. 
\textbf{Heterogeneous Division} This module has a computational complexity of $\mathcal{O}(NP^2d^2)$ and a space complexity of $\mathcal{O}(Pd)$ due to the channel-aware projection.
\textbf{H-PCorr Contrastive Learning}  The time complexity of calculating loss in Equation~\ref{loss} is $\mathcal{O}(PdN^2)$.

Since $P$, $K$, $M$, and $l$ are much smaller than $N$ and $d$, they will not be considered as the primary components in the complexity analysis. So the total computational complexity is $\mathcal{O}(dN^2+Nd^2)$ and the total space complexity is $\mathcal{O}(d+N)$. Most models cannot avoid having a computational complexity of $\mathcal{O}(d^2)$ and a space complexity of $\mathcal{O}(d)$. Therefore, if we focus only on the complexity with respect to (N) in our discussion, our model has a computational complexity of $\mathcal{O}(N^2)$ and a space complexity of $\mathcal{O}(N)$ during training.

\subsection{Inference Phase}
 In inference Phase, CoRA only includes projectors in the Heterogeneous Division and Heterogeneous Fusion modules, based on the above discussion, our model has a computational complexity of $\mathcal{O}(N)$ and a space complexity of $\mathcal{O}(1)$ during inference.
 \newpage

\section{Theoretical Analyses}
\label{Theoretical Analyses}
\subsection{The significance of Time-Varying and Time-Invariant Composition} 
The channel correlation can be expressed by combining a long-term stable state with dynamic changes. We decompose the learnable correlation into two parts, $\boldsymbol{Q}_t$ and $\boldsymbol{V}$, as shown in Figure \ref{DCorr}, to fit the correlation relationship lightweightly.

\setcounter{theorem}{0}
{
\color{black}
\begin{theorem}
When the time series is locally stationary, the Time-Varying and Time-Invariant Decomposition allows $\boldsymbol{Q}_t\boldsymbol{V}\boldsymbol{Q}_t^T$ to contain both time-varying and time-invariant information, like conventional additive decomposition.

Specifically, $\boldsymbol{Q}_t\boldsymbol{V}\boldsymbol{Q}_t^T$ can be expressed as the sum of a time-invariant matrix $\boldsymbol{M}_i$ and a time-varying matrix $\boldsymbol{M}_v$, as shown below:
\begin{align}
    \boldsymbol{Q}_t\boldsymbol{V}\boldsymbol{Q}_t^T = \boldsymbol{M}_{i} + \boldsymbol{M}_{v}~\textcolor{black}{.}
\end{align}
\end{theorem}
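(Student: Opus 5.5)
The plan is to split the time-varying factor $\boldsymbol{Q}_t$ into a stable part and a fluctuation, and then expand the quadratic form $\boldsymbol{Q}_t\boldsymbol{V}\boldsymbol{Q}_t^T$.

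\textbf{Step 1: split $\boldsymbol{Q}_t$.} Local stationarity is what justifies the split. Over a local window, the polynomial coefficients $\boldsymbol{\mathcal{C}}_t=f(\boldsymbol{\tilde{X}}_t)$ have a well-defined mean. I therefore write $C_{i,t}=\bar{C}_i+\Delta C_{i,t}$, where $\bar{C}_i$ is the local (or expected) value of the coefficient and $\Delta C_{i,t}$ is a zero-mean fluctuation. Plugging this into Equation~\ref{3} gives
\begin{align}
\boldsymbol{Q}_t=\bar{\boldsymbol{Q}}+\Delta\boldsymbol{Q}_t,\qquad \bar{\boldsymbol{Q}}=\sum_{i=0}^{K}\bar{C}_i\boldsymbol{q}^i,\quad \Delta\boldsymbol{Q}_t=\sum_{i=0}^{K}\Delta C_{i,t}\boldsymbol{q}^i .
\end{align}
Here $\bar{\boldsymbol{Q}}$ is independent of $t$, because the basis $\boldsymbol{q}$ is global and shared across time.

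\textbf{Step 2: expand the product.} By bilinearity,
\begin{align}
\boldsymbol{Q}_t\boldsymbol{V}\boldsymbol{Q}_t^T=\bar{\boldsymbol{Q}}\boldsymbol{V}\bar{\boldsymbol{Q}}^T+\bar{\boldsymbol{Q}}\boldsymbol{V}\Delta\boldsymbol{Q}_t^T+\Delta\boldsymbol{Q}_t\boldsymbol{V}\bar{\boldsymbol{Q}}^T+\Delta\boldsymbol{Q}_t\boldsymbol{V}\Delta\boldsymbol{Q}_t^T .
\end{align}
I then set $\boldsymbol{M}_i:=\bar{\boldsymbol{Q}}\boldsymbol{V}\bar{\boldsymbol{Q}}^T$ and let $\boldsymbol{M}_v$ be the sum of the remaining three terms. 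Since $\boldsymbol{V}$ is a global learnable matrix, $\boldsymbol{M}_i$ depends only on $\boldsymbol{q}$, the means $\bar{C}_i$, and $\boldsymbol{E}_1,\boldsymbol{E}_2$, so it is time-invariant. Each term of $\boldsymbol{M}_v$ contains at least one factor $\Delta\boldsymbol{Q}_t$, so it carries the time-varying information. In particular, $\boldsymbol{M}_v$ vanishes whenever the fluctuations vanish, and is generically nonzero otherwise.

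\textbf{Step 3: compare with the additive decomposition.} To make ``like conventional additive decomposition'' precise, I would observe two things. First, $\mathbb{E}[\boldsymbol{M}_v]=\mathbb{E}[\Delta\boldsymbol{Q}_t\boldsymbol{V}\Delta\boldsymbol{Q}_t^T]$, because the cross terms have zero mean. This second-moment term is itself constant under stationarity, so it can be absorbed into $\boldsymbol{M}_i$. After this absorption, $\boldsymbol{M}_v$ becomes a zero-mean fluctuation, which matches the form $\boldsymbol{A}+\boldsymbol{B}_t$ of \citep{EnhanceNet}. Second, I would contrast the parameter counts. The additive form stores an $N\times N$ static matrix plus a dynamic generator. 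The composition stores only $\boldsymbol{q}\in\mathbb{R}^{N\times M}$ and $\boldsymbol{E}_{1},\boldsymbol{E}_2\in\mathbb{R}^{M\times d_e}$, which is $\mathcal{O}(NM)$ parameters.

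\textbf{Main obstacle.} The delicate step is Step 1, namely giving ``locally stationary'' enough content to define $\bar{C}_i$ so that it is genuinely constant over the window. I would first try defining $\bar{C}_i$ as the expectation of $f(\boldsymbol{\tilde{X}}_t)$ under the local stationary distribution. This requires $f$ to be measurable and integrable, which holds for an MLP with bounded inputs. If that proves too heavy, the fallback is the empirical window average. With that choice the decomposition is exact algebraically, and stationarity is used only to argue that the average does not drift with $t$.
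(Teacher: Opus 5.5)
Your proposal is correct and follows the paper's proof. The paper also splits $\boldsymbol{Q}_t$ into a mean $\bar{\boldsymbol{Q}}_t$ plus a residual, expands $\boldsymbol{Q}_t\boldsymbol{V}\boldsymbol{Q}_t^T$ bilinearly into four terms, and takes $\boldsymbol{M}_i$ to be the mean--mean term and $\boldsymbol{M}_v$ the sum of the other three; your routing of the split through the coefficients $C_{i,t}$ (which makes explicit the $t$-independence that the paper's notation $\bar{\boldsymbol{Q}}_t$ leaves implicit) and your absorption of $\mathbb{E}[\Delta\boldsymbol{Q}_t\boldsymbol{V}\Delta\boldsymbol{Q}_t^T]$ into $\boldsymbol{M}_i$ to make $\boldsymbol{M}_v$ zero-mean are refinements beyond the paper, not a different argument.
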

}
\textcolor{black}{
\textit{Proof.} Under the assumptions of locally stationary, $\boldsymbol{Q}_t$ can be expressed as $\Bar{\boldsymbol{Q}}_t + \Tilde{\boldsymbol{Q}}_t$, where $\Bar{\boldsymbol{Q}}_t$ represents the mean value and $\Tilde{\boldsymbol{Q}}_t$represents the residual.
Therefore,$\boldsymbol{M}_t^{corr}$can be expressed as:}
\textcolor{black}{
\begin{equation}
\begin{split}
    \boldsymbol{Q}_t\boldsymbol{V}\boldsymbol{Q}_t^T &=  (\Bar{\boldsymbol{Q}}_t + \Tilde{\boldsymbol{Q}}_t)\boldsymbol{V}(\Bar{\boldsymbol{Q}}_t + \Tilde{\boldsymbol{Q}}_t)^T\\
    &= \Bar{\boldsymbol{Q}}_t\boldsymbol{V}\Bar{\boldsymbol{Q}}_t^T + \Bar{\boldsymbol{Q}}_t\boldsymbol{V}\Tilde{\boldsymbol{Q}}_t^T + \Tilde{\boldsymbol{Q}}_t\boldsymbol{V}\Bar{\boldsymbol{Q}}_t^T + \Tilde{\boldsymbol{Q}}_t\boldsymbol{V}\Tilde{\boldsymbol{Q}}_t^T\\
    &= (\Bar{\boldsymbol{Q}}_t\boldsymbol{V}\Bar{\boldsymbol{Q}}_t^T) + (\Bar{\boldsymbol{Q}}_t\boldsymbol{V}\Tilde{\boldsymbol{Q}}_t^T + \Tilde{\boldsymbol{Q}}_t\boldsymbol{V}\Bar{\boldsymbol{Q}}_t^T + \Tilde{\boldsymbol{Q}}_t\boldsymbol{V}\Tilde{\boldsymbol{Q}}_t^T)\\
    &= \boldsymbol{M}_{i} + \boldsymbol{M}_{v}~\textcolor{black}{,}
\end{split}
\end{equation}}
\textcolor{black}{
where $\boldsymbol{M}_{i} =  \Bar{\boldsymbol{Q}}_t\boldsymbol{V}\Bar{\boldsymbol{Q}}_t^T$ has the same value at different times, and $\boldsymbol{M}_{v} = \Bar{\boldsymbol{Q}}_t\boldsymbol{V}\Tilde{\boldsymbol{Q}}_t^T + \Tilde{\boldsymbol{Q}}_t\boldsymbol{V}\Bar{\boldsymbol{Q}}_t^T + \Tilde{\boldsymbol{Q}}_t\boldsymbol{V}\Tilde{\boldsymbol{Q}}_t^T$ has different values at different times.}

\subsection{The fitting ability of Time-aware Polynomials} 
Since time series exhibit regular changes, such as trends and seasonality, the dynamic correlation changes also have a certain regularity. To this end, we propose Time-aware Polynomials to fit the changing correlations better.
{
\color{black}
\begin{theorem}
 When the time series is locally stationary, we can approximate the underlying correlation matrix with a high-order polynomial. 

Specifically, assuming that the correlation is a smooth function to the basis $\boldsymbol{q}$, the true correlation component $\boldsymbol{Q_t}^*$ can be expressed as $ \mathcal{F}(\boldsymbol{q}) $. The fitting error of Time-aware Polynomials decreases as the highest degree $ K $ of the polynomial increases. The error can be formalized as follows:
\begin{align}
 |\boldsymbol{Q_t}^* - \boldsymbol{Q_t}| = \frac{\mathcal{F}^{(K+1)}(\boldsymbol{\xi})}{(K+1)!}\boldsymbol{q}^{(K+1)}, \boldsymbol{\xi}\in[-|\boldsymbol{q}|,|\boldsymbol{q}|]~\textcolor{black}{.}
\end{align}
\end{theorem}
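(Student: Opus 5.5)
The argument is an entrywise Taylor expansion with Lagrange remainder. All operations in equation \ref{3} are elementwise: $\boldsymbol{q}^i$ is a Hadamard power, and the coefficient $C_{i,t}\in\mathbb{R}^N$ multiplies row $n$ of $\boldsymbol{q}^i$ by the scalar $C_{i,t}[n]$. The matrix statement therefore reduces to a family of scalar statements, one per entry $(n,m)$ with $n\le N$, $m\le M$. So I will fix an entry and write $x=\boldsymbol{q}[n,m]$. The smoothness hypothesis says that, at time $t$, the true time-varying component satisfies $\boldsymbol{Q}_t^*[n,m]=\mathcal{F}_{t,n}(x)$ for some function $\mathcal{F}_{t,n}$ that is $K+1$ times continuously differentiable on an interval containing $[-|x|,|x|]$.

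The role of local stationarity is to justify that, within a local window, the dependence on $t$ sits only in this scalar function, and hence only in the coefficients, while the basis $\boldsymbol{q}$ is shared across time. This matches the ``shared matrix basis'' design, and it is the modelling assumption I will state explicitly.

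The main step is to choose the coefficients as the Maclaurin coefficients,
\begin{align*}
C_{i,t}[n] = \frac{\mathcal{F}_{t,n}^{(i)}(0)}{i!}, \qquad i=0,\dots,K.
\end{align*}
I then argue that the MLP $f$ in $\boldsymbol{\mathcal{C}}_t=f(\boldsymbol{\tilde{X}}_t)$ can represent, or approximate, this map, since these are just $N(K+1)$ scalars depending on the data. With this choice, $\boldsymbol{Q}_t[n,m]=\sum_{i=0}^K C_{i,t}[n]\,x^i$ is exactly the $K$-th Taylor polynomial of $\mathcal{F}_{t,n}$ at $0$. Taylor's theorem with the Lagrange remainder then gives some $\xi$ between $0$ and $x$, hence $\xi\in[-|x|,|x|]$, such that
\begin{align*}
\boldsymbol{Q}_t^*[n,m]-\boldsymbol{Q}_t[n,m]=\frac{\mathcal{F}_{t,n}^{(K+1)}(\xi)}{(K+1)!}\,x^{K+1}.
\end{align*}
Collecting the entries, and reading $\boldsymbol{\xi}$ as a matrix of entrywise intermediate points, gives the stated identity with $\boldsymbol{q}^{(K+1)}$ the $(K+1)$-fold Hadamard power.

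For the claim that the error decreases with $K$, I assume a uniform derivative bound $|\mathcal{F}^{(k)}|\le B$ on $[-|x|,|x|]$. Then the error is at most $B|x|^{K+1}/(K+1)!$, which goes to $0$ as $K\to\infty$ and is eventually decreasing. If $|x|\le 1$, for instance after normalising the basis, it is monotone from the start.

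The main obstacle is making the hypothesis precise, not the calculation. One issue is that the coefficients are per-row, $C_{i,t}\in\mathbb{R}^N$, rather than per-entry. I handle this by letting $\mathcal{F}_{t,n}$ depend on the row only, so a single function per row applies across all $M$ columns; this is exactly what the structure of equation \ref{3} permits, and I state it as part of the smoothness assumption. A second issue is that the Taylor coefficients must be realisable by $f$. Here I will treat the bound as an existence statement: there exists a choice of coefficients achieving this error, so the learned optimum does no worse.
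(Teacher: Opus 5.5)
Your proposal is correct and is essentially the paper's argument: both take $C_{i,t}$ to be the Maclaurin coefficients $\mathcal{F}^{(i)}(\boldsymbol{0})/i!$ and obtain the error as the Lagrange remainder with $\boldsymbol{\xi}$ between $\boldsymbol{0}$ and $\boldsymbol{q}$; the paper derives that remainder itself via the Cauchy mean value theorem on the auxiliary functions $\mathcal{H}(\boldsymbol{t})$ and $\mathcal{G}(\boldsymbol{t})=(\boldsymbol{q}-\boldsymbol{t})^{K+1}$, where you cite Taylor's theorem. Your explicit entrywise, row-dependent reading of $\mathcal{F}$ and the bound $B|x|^{K+1}/(K+1)!$ supply what the paper leaves implicit, since it never justifies the ``decreases with $K$'' claim; also note that, as in the paper, you actually prove the signed identity, so the stated formula holds only with absolute values on both sides.
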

}

\textit{Proof.}  Given the true correlation as $\mathcal{F}(\boldsymbol{q})$, Since $\mathcal{F}$ is sufficiently smooth to the basis $\boldsymbol{q}$, we can perform a Maclaurin expansion of $\mathcal{F}$ around $\textit{\textbf{0}}$:
\begin{align}
\mathcal{\mathcal{F}}(\boldsymbol{q}) = \mathcal{F}(\textit{\textbf{0}}) + \mathcal{F}'(\textit{\textbf{0}})\boldsymbol{q} + \frac{\mathcal{F}''(\textit{\textbf{0}})}{2!}\boldsymbol{q}^2 + \cdots + \frac{\mathcal{F}^{(K)}(\textit{\textbf{0}})}{K!}\boldsymbol{q}^K + \cdots + \frac{\mathcal{F}^{(n)}(\textit{\textbf{0}})}{n!}\boldsymbol{q}^n + \cdots ~\textcolor{black}{.}
\end{align}
We construct auxiliary functions: 
\begin{align}
    \mathcal{H}(\boldsymbol{t}) &= \mathcal{F}(\boldsymbol{q}) - [\mathcal{F}(\boldsymbol{t}) + \mathcal{F}'(\boldsymbol{t})(\boldsymbol{q}-\boldsymbol{t}) + \frac{\mathcal{F}''(\boldsymbol{t})}{2!}(\boldsymbol{q}-\boldsymbol{t})^2 + \cdots + \frac{\mathcal{F}^{(K)}(\boldsymbol{t})}{K!}(\boldsymbol{q}-\boldsymbol{t})^K]~\textcolor{black}{,} \\
    \mathcal{G}(\boldsymbol{t}) &= (\boldsymbol{q}-\boldsymbol{t})^{(K+1)}~\textcolor{black}{.}
\end{align}
Assume $\boldsymbol{q} > 0$. Then, $\mathcal{H}$ and $\mathcal{G}$ are still continuously differentiable, and the following rules apply:
\begin{gather}
    \mathcal{H}(\boldsymbol{t})' = -\frac{\mathcal{F}^{(K+1)}(\boldsymbol{t})}{K!}(\boldsymbol{q}-\boldsymbol{t})^K~\textcolor{black}{,} \\
    \mathcal{G}(\boldsymbol{t})' = -(K+1)(\boldsymbol{q}-\boldsymbol{t})^{K} \neq \textit{\textbf{0}}~\textcolor{black}{.}
\end{gather}
Since $\mathcal{H}(\boldsymbol{q}) = \mathcal{G}(\boldsymbol{q}) = 0$, by the Cauchy Mean Value Theorem, 
$\exists~\boldsymbol{\xi} \in (0, \boldsymbol{q}), s.t.$

\begin{gather}
    \frac{\mathcal{H}(\boldsymbol{0})}{\mathcal{G}(\boldsymbol{0})} = \frac{\mathcal{H}(\boldsymbol{0}) - \mathcal{H}(\boldsymbol{q})}{\mathcal{G}(\boldsymbol{0}) - \mathcal{G}(\boldsymbol{q})} = \frac{\mathcal{H}(\boldsymbol{0}) - \mathcal{H}(\boldsymbol{q})}{\mathcal{G}(\boldsymbol{0}) - \mathcal{G}(\boldsymbol{q})} = \frac{\mathcal{H}'(\boldsymbol{\xi})}{\mathcal{G}'(\boldsymbol{\xi})}= \frac{\mathcal{F}^{(K+1)}(\boldsymbol{\xi})}{(K+1)!} ~\textcolor{black}{.}
\end{gather}

Therefore, we can derive the following equation:
\begin{align}
\mathcal{\mathcal{F}}(\boldsymbol{q}) = \mathcal{F}(\textit{\textbf{0}}) + \mathcal{F}'(\textit{\textbf{0}})\boldsymbol{q} + \frac{\mathcal{F}''(\textit{\textbf{0}})}{2!}\boldsymbol{q}^2 + \cdots + \frac{\mathcal{F}^{(K)}(\textit{\textbf{0}})}{K!}\boldsymbol{q}^K + \frac{\mathcal{F}^{(K+1)}(\boldsymbol{\xi})}{(K+1)!}\boldsymbol{q}^{(K+1)}, \boldsymbol{\xi}\in[0,\boldsymbol{q}]~\textcolor{black}{.}
\end{align}
Let $C_{i,t} = \frac{\mathcal{F}^{(i)}}{i!}$. Then, we have the following equation:
\begin{align}
\mathcal{\mathcal{F}}(\boldsymbol{q}) = C_{0,t} + C_{1,t}\boldsymbol{q} + C_{2,t}\boldsymbol{q}^2 + \cdots + C_{K,t}\boldsymbol{q}^K + \frac{\mathcal{F}^{(K+1)}(\boldsymbol{\xi})}{(K+1)!}\boldsymbol{q}^{(K+1)}, \boldsymbol{\xi}\in[0,\boldsymbol{q}]~\textcolor{black}{.}
\end{align}
That is:
\begin{align}
\mathcal{\mathcal{F}}(\boldsymbol{q}) - \boldsymbol{Q_t} = \frac{\mathcal{F}^{(K+1)}(\boldsymbol{\xi})}{(K+1)!}\boldsymbol{q}^{(K+1)}, \boldsymbol{\xi}\in[0,\boldsymbol{q}]~\textcolor{black}{.}
\end{align}
For all $\boldsymbol{q} > 0$ and $\boldsymbol{q} < 0$, we have the following equation:
\begin{align}
|\mathcal{\mathcal{F}}(\boldsymbol{q}) - \boldsymbol{Q_t}| = \frac{\mathcal{F}^{(K+1)}(\boldsymbol{\xi})}{(K+1)!}\boldsymbol{q}^{(K+1)}, \boldsymbol{\xi}\in[-|\boldsymbol{q}|,|\boldsymbol{q}|]~\textcolor{black}{.}
\end{align}
And that is:
\begin{align}
 |\boldsymbol{Q_t}^* - \boldsymbol{Q_t}| = \frac{\mathcal{F}^{(K+1)}(\boldsymbol{\xi})}{(K+1)!}\boldsymbol{q}^{(K+1)}, \boldsymbol{\xi}\in[-|\boldsymbol{q}|,|\boldsymbol{q}|]~\textcolor{black}{.}
\end{align}
\newpage
\section{Experimental Details}
\label{Experimental Details}

\subsection{Datasets}
\label{appendix DATASETS}
To conduct comprehensive and fair comparisons for different models, we conduct experiments on ten well-known forecasting benchmarks as the target datasets, including: 
(I) \textbf{ETT}~\citep{zhou2021informer} datasets contain 7 variates collected from two different electric transformers from July 2016 to July 2018. It consists of four subsets, of which ETTh1/ETTh2 are recorded hourly, and ETTm1/ETTm2 are recorded every 15 minutes. 
(II) \textbf{Electricity}~\citep{misc_electricityloaddiagrams20112014_321} contains the electricity consumption of 321 customers from July 2016 to July 2019, recorded hourly. 
(III) \textbf{Traffic}~\citep{wu2021autoformer} contains road occupancy rates measured by 862 sensors on freeways in the San Francisco Bay Area from 2015 to 2016, recorded hourly. 
(IV) \textbf{Solar}~\citep{lai2018modeling} records solar power generation from 137 PV plants in 2006, every 10 minutes. 
(V) \textbf{Weather}~\citep{wu2021autoformer} collects 21 meteorological indicators, including temperature and barometric pressure, for Germany in 2020, recorded every 10 minutes. 
(VI) \textbf{AQShunyi}~\citep{zhang2017cautionary} is an air quality dataset from a measurement station, for 4 years. 
(VII) \textbf{ZafNoo}~\citep{poyatos2020global} is collected from the Sapflux data project and includes sap flow measurements and environmental variables. The details of the benchmark
datasets are included in Table~\ref{Multivariate datasets}
    
\begin{table*}[!htbp]
\caption{Statistics of datasets.}
\label{Multivariate datasets}
\resizebox{1\columnwidth}{!}{
{
  \arrayrulecolor{black}
\begin{tabular}{@{}lllrrcl@{}}
\toprule
Dataset      & Domain      & Frequency & Lengths & Dim & Split  & Description\\ \midrule
ETTh1        & Electricity & 1 hour     & 14,400      & 7        & 6:2:2 & Power transformer 1, comprising seven indicators such as oil temperature and useful load\\
ETTh2        & Electricity & 1 hour    & 14,400      & 7        & 6:2:2 & Power transformer 2, comprising seven indicators such as oil temperature and useful load\\
ETTm1        & Electricity & 15 mins   & 57,600      & 7        & 6:2:2 & Power transformer 1, comprising seven indicators such as oil temperature and useful load\\
ETTm2        & Electricity & 15 mins   & 57,600      & 7        & 6:2:2 & Power transformer 2, comprising seven indicators such as oil temperature and useful load\\
Weather      & Environment & 10 mins   & 52,696      & 21       & 7:1:2 & Recorded
every for the whole year 2020, which contains 21 meteorological indicators\\
Electricity  & Electricity & 1 hour    & 26,304      & 321      & 7:1:2 & Electricity records the electricity consumption in kWh every 1 hour from 2012 to 2014\\
Solar        & Energy      & 10 mins   & 52,560      & 137      & 6:2:2 &Solar production records collected from 137 PV plants in Alabama \\
Traffic      & Traffic     & 1 hour    & 17,544      & 862      & 7:1:2 & Road occupancy rates measured by 862 sensors on San Francisco Bay area freeways\\

AQShunyi      & Environment      & 1 hour    & 35,064      & 11      & 7:1:2 &  Air quality dataset from a measurement
station, for 4 years\\

ZafNoo      & Nature     & 30 mins   & 19,225      & 11      & 7:1:2 &Sap flow measurements and environmental variables from the Sapflux data project.\\
 \bottomrule
\end{tabular}}}
\end{table*}

\subsection{Baselines}

Remarkable achievements in deep learning have been witnessed across time series analysis~\citep{wang2026iclrqdf,ARROW,yu2025merlin,wang2026iclrdistdf,cheng2026metagnsdformer,wang2025timeo1,11002729,wang2025fredf, MSH-LLM, wang2025optimal, AdaMSHyper, wang2023accurate,MAGNN}, computer vision~\citep{ma2024followpose,ma2025controllable}, and various other domains\citep{ma2025followyourclick,FOTraj,ARROW,wu2026aurora,GCGNet,yang2}. Existing literature indicates that features extracted automatically usually surpass human-designed ones in terms of performance\citep{ma2024followyouremoji,wu2025k2vae,ma2025followfaster,wu2025srsnet,liu2026astgi,VoT,yang1}.

In the realm of time series forecasting, numerous models have surfaced in recent years. We choose models with superior predictive performance in our benchmark, including the pre-trained time series models: Timer~\citep{Timer}, TTM~\citep{ekambaram2024ttms} and Moment~\citep{moment}; The LLM-based models: CALF~\citep{liu2024taming}, GPT4TS~\citep{GPT4TS}, UniTime~\citep{unitime}; The specific descriptions for each of these models---see Table~\ref{Descriptions of baselines.}.

\begin{table*}[h!]
\centering
\caption{Descriptions of time series forecasting models in experiment.}
\label{Descriptions of baselines.}
  \resizebox{1\linewidth}{!}{
  
    {
  \arrayrulecolor{black}
    \begin{tabular}{l|l}

     \toprule
        \textbf{Models} & \textbf{Descriptions} \\ \midrule
       
        Moment~\citep{moment} & \makecell[l]{Moment is a transformer system pre-trained on a masked time series task. It reconstructs masked portions of time series \\for tasks like forecasting, classification, anomaly detection, and imputation.} \\\midrule

        TTM~\citep{ekambaram2024ttms} &  \makecell[l]{It is based on MLP-Mixer blocks with gated attention and multi-resolution sampling. It captures temporal patterns and \\cross-channel correlations for time-series forecasting, optimized for zero/few-shot learning with low computational cost.} \\\midrule
       
        Timer~\citep{Timer} & \makecell[l]{Timer is a GPT-style autoregressive model for time series analysis, predicting the next token in single-series sequences. \\It supports tasks like forecasting, imputation, and anomaly detection across different time series.}\\ \midrule

        CALF~\citep{liu2024taming} & \makecell[l]{CALF is a cross-modal knowledge distillation framework that aligns time series data with pre-trained LLMs by \\leveraging both static and dynamic knowledge, achieving state-of-the-art performance in both long- and \\short-term forecasting tasks with strong generalization abilities.}  \\\midrule
        GPT4TS~\citep{GPT4TS} & \makecell[l]{GPT4TS fine-tunes the limited parameters of LLM, which demonstrates competitive performance by transferring\\ knowledge from large-scale pre-training text data.} \\ \midrule
        UniTime~\citep{unitime} & UniTime designs domain instructions to align time series and text modalities. \\ \midrule
     
    \end{tabular}} }
\end{table*}

\subsection{Implementation Details}
\label{Implementation Details}
We utilize the TSFM-Bench~\citep{foundts} code repository for
unified evaluation. Following the settings in TFB~\citep{tfb} and FMTS-Bench~\citep{foundts}, we do not apply the “Drop Last”
trick to ensure a fair comparison. All experiments of CoRA are conducted using PyTorch in Python
3.10 and executed on an NVIDIA Tesla-A800 GPU. The MSE loss function guides the training
process and employs the ADAM optimizer.

\newpage
\section{Full results}
\label{Full results}

\begin{table*}[!h]
\caption{The table reports MSE and MAE of LLM-based models for different forecasting horizons $F \in \{96, 192, 336, 720\}$. The better results are highlighted in \textbf{bold}.}
\label{full results of llm-basaed models}
\resizebox{\columnwidth}{!}{
\begin{tabular}{c|c|cccc|cccc|cccc}
    \toprule
\multicolumn{2}{c|}{Model} & \multicolumn{4}{c|}{GPT4TS (2023)} & \multicolumn{4}{c|}{CALF (2025)} & \multicolumn{4}{c}{UniTime (2024)} \\    \midrule
\multicolumn{2}{c|}{Plugin} & \multicolumn{2}{c}{\without} & \multicolumn{2}{c|}{\with} & \multicolumn{2}{c}{\without} & \multicolumn{2}{c|}{\with} & \multicolumn{2}{c}{\without} & \multicolumn{2}{c}{\with}\\     \midrule
\multicolumn{2}{c|}{Metric} & MSE & MAE & MSE & MAE & MSE & MAE & MSE & MAE & MSE & MAE & MSE & MAE \\ \midrule
\multirow[c]{4}{*}{\rotatebox{90}{ETTh1}} 
& 96  & 0.439 & 0.447 & \textbf{0.427} & \textbf{0.435} & 0.407 & 0.427 & \textbf{0.396} & \textbf{0.417} & 0.718 & 0.575 & \textbf{0.690} & \textbf{0.557} \\
&192 & 0.461 & 0.458 & \textbf{0.449} & \textbf{0.448} & 0.429 & 0.443 & \textbf{0.420} & \textbf{0.433} & 0.750 & 0.592 & \textbf{0.723} & \textbf{0.573} \\
&336 & 0.464 & 0.468 & \textbf{0.451} & \textbf{0.458} & 0.443 & 0.456 & \textbf{0.433} & \textbf{0.446} & 0.723 & 0.592 & \textbf{0.702} & \textbf{0.582} \\
&720 & 0.509 & 0.512 & \textbf{0.497} & \textbf{0.496} & 0.495 & 0.495 & \textbf{0.487} & \textbf{0.484} & 0.767 & 0.623 & \textbf{0.739} & \textbf{0.606} \\
 \midrule 
 \multirow[c]{4}{*}{\rotatebox{90}{ETTh2}} 
& 96  & 0.331 & 0.382 & \textbf{0.318} & \textbf{0.369} & 0.303 & 0.364 & \textbf{0.297} & \textbf{0.356} & 0.360 & 0.391 & \textbf{0.351} & \textbf{0.383} \\
&192 & 0.370 & 0.408 & \textbf{0.356} & \textbf{0.393} & 0.386 & 0.402 & \textbf{0.378} & \textbf{0.391} & 0.388 & 0.430 & \textbf{0.376} & \textbf{0.417} \\
&336 & 0.380 & 0.422 & \textbf{0.369} & \textbf{0.413} & 0.387 & 0.419 & \textbf{0.382} & \textbf{0.408} & 0.393 & 0.436 & \textbf{0.382} & \textbf{0.427} \\
&720 & 0.419 & 0.452 & \textbf{0.405} & \textbf{0.441} & 0.417 & 0.449 & \textbf{0.409} & \textbf{0.458} & 0.455 & 0.473 & \textbf{0.437} & \textbf{0.458} \\
 \midrule 
 \multirow[c]{4}{*}{\rotatebox{90}{ETTm1}} 
& 96  & 0.344 & 0.381 & \textbf{0.333} & \textbf{0.372} & 0.319 & 0.366 & \textbf{0.309} & \textbf{0.361} & 0.358 & 0.385 & \textbf{0.345} & \textbf{0.377} \\
&192 & 0.376 & 0.398 & \textbf{0.366} & \textbf{0.389} & 0.346 & 0.381 & \textbf{0.338} & \textbf{0.372} & 0.386 & 0.403 & \textbf{0.371} & \textbf{0.391} \\
&336 & 0.395 & 0.408 & \textbf{0.386} & \textbf{0.399} & 0.386 & 0.407 & \textbf{0.378} & \textbf{0.396} & 0.421 & 0.420 & \textbf{0.403} & \textbf{0.407} \\
&720 & 0.442 & 0.434 & \textbf{0.432} & \textbf{0.425} & 0.440 & 0.433 & \textbf{0.432} & \textbf{0.424} & 0.469 & 0.448 & \textbf{0.459} & \textbf{0.435} \\
 \midrule 
 \multirow[c]{4}{*}{\rotatebox{90}{ETTm2}} 
& 96  & 0.190 & 0.281 & \textbf{0.184} & \textbf{0.274} & 0.182 & 0.272 & \textbf{0.177} & \textbf{0.267} & 0.191 & 0.278 & \textbf{0.183} & \textbf{0.267} \\
&192 & 0.242 & 0.314 & \textbf{0.232} & \textbf{0.306} & 0.238 & 0.310 & \textbf{0.230} & \textbf{0.304} & 0.249 & 0.316 & \textbf{0.239} & \textbf{0.307} \\
&336 & 0.298 & 0.350 & \textbf{0.289} & \textbf{0.341} & 0.297 & 0.349 & \textbf{0.287} & \textbf{0.340} & 0.345 & 0.375 & \textbf{0.334} & \textbf{0.367} \\
&720 & 0.387 & 0.401 & \textbf{0.371} & \textbf{0.390} & 0.373 & 0.397 & \textbf{0.363} & \textbf{0.390} & 0.382 & 0.392 & \textbf{0.367} & \textbf{0.382} \\
 \midrule 
 \multirow[c]{4}{*}{\rotatebox{90}{Electricity}} 
& 96  & 0.180 & 0.295 & \textbf{0.173} & \textbf{0.288} & 0.142 & 0.241 & \textbf{0.139} & \textbf{0.236} & 0.174 & 0.283 & \textbf{0.168} & \textbf{0.277} \\
&192 & 0.194 & 0.308 & \textbf{0.186} & \textbf{0.302} & 0.158 & 0.254 & \textbf{0.152} & \textbf{0.248} & 0.186 & 0.292 & \textbf{0.181} & \textbf{0.285} \\
&336 & 0.209 & 0.319 & \textbf{0.204} & \textbf{0.314} & 0.175 & 0.272 & \textbf{0.169} & \textbf{0.263} & 0.201 & 0.305 & \textbf{0.195} & \textbf{0.300} \\
&720 & 0.248 & 0.349 & \textbf{0.241} & \textbf{0.340} & 0.218 & 0.307 & \textbf{0.214} & \textbf{0.299} & 0.242 & 0.337 & \textbf{0.233} & \textbf{0.325} \\
 \midrule 
 \multirow[c]{4}{*}{\rotatebox{90}{Traffic}} 
& 96  & 0.440 & 0.323 & \textbf{0.429} & \textbf{0.314} & 0.407 & 0.299 & \textbf{0.395} & \textbf{0.291} & 0.424 & 0.311 & \textbf{0.414} & \textbf{0.303} \\
&192 & 0.423 & 0.305 & \textbf{0.413} & \textbf{0.299} & 0.424 & 0.309 & \textbf{0.413} & \textbf{0.301} & 0.437 & 0.320 & \textbf{0.427} & \textbf{0.309} \\
&336 & 0.433 & 0.309 & \textbf{0.425} & \textbf{0.301} & 0.436 & 0.319 & \textbf{0.426} & \textbf{0.309} & 0.476 & 0.333 & \textbf{0.465} & \textbf{0.325} \\
&720 & 0.468 & 0.325 & \textbf{0.455} & \textbf{0.315} & 0.477 & 0.342 & \textbf{0.467} & \textbf{0.332} & 0.485 & 0.363 & \textbf{0.477} & \textbf{0.354} \\
 \midrule 
 \multirow[c]{4}{*}{\rotatebox{90}{Solar}} 
& 96  & 0.243 & 0.262 & \textbf{0.233} & \textbf{0.251} & 0.204 & 0.276 & \textbf{0.198} & \textbf{0.269} & 0.251 & 0.286 & \textbf{0.246} & \textbf{0.278} \\
&192 & 0.259 & 0.295 & \textbf{0.250} & \textbf{0.282} & 0.225 & 0.291 & \textbf{0.220} & \textbf{0.284} & 0.251 & 0.321 & \textbf{0.246} & \textbf{0.313} \\
&336 & 0.260 & 0.278 & \textbf{0.250} & \textbf{0.267} & 0.244 & 0.308 & \textbf{0.239} & \textbf{0.302} & 0.253 & 0.323 & \textbf{0.247} & \textbf{0.316} \\
&720 & 0.260 & 0.281 & \textbf{0.248} & \textbf{0.275} & 0.247 & 0.314 & \textbf{0.240} & \textbf{0.308} & 0.256 & 0.327 & \textbf{0.253} & \textbf{0.320} \\
 \midrule 
 \multirow[c]{4}{*}{\rotatebox{90}{Weather}} 
& 96  & 0.189 & 0.244 & \textbf{0.181} & \textbf{0.237} & 0.164 & 0.217 & \textbf{0.161} & \textbf{0.211} & 0.184 & 0.240 & \textbf{0.175} & \textbf{0.229} \\
&192 & 0.226 & 0.276 & \textbf{0.218} & \textbf{0.265} & 0.207 & 0.255 & \textbf{0.201} & \textbf{0.248} & 0.228 & 0.275 & \textbf{0.217} & \textbf{0.265} \\
&336 & 0.270 & 0.306 & \textbf{0.260} & \textbf{0.294} & 0.260 & 0.293 & \textbf{0.253} & \textbf{0.284} & 0.271 & 0.305 & \textbf{0.259} & \textbf{0.293} \\
&720 & 0.330 & 0.348 & \textbf{0.320} & \textbf{0.336} & 0.323 & 0.340 & \textbf{0.312} & \textbf{0.331} & 0.335 & 0.351 & \textbf{0.323} & \textbf{0.336} \\
 \midrule 
 \multirow[c]{4}{*}{\rotatebox{90}{AQShunyi}} 
& 96  & 0.800 & 0.536 & \textbf{0.786} & \textbf{0.526} & 0.691 & 0.510 & \textbf{0.675} & \textbf{0.501} & 0.691 & 0.513 & \textbf{0.667} & \textbf{0.496} \\
&192 & 0.846 & 0.550 & \textbf{0.824} & \textbf{0.538} & 0.722 & 0.517 & \textbf{0.701} & \textbf{0.503} & 0.738 & 0.523 & \textbf{0.712} & \textbf{0.509} \\
&336 & 0.855 & 0.557 & \textbf{0.832} & \textbf{0.545} & 0.736 & 0.526 & \textbf{0.719} & \textbf{0.517} & 0.748 & 0.542 & \textbf{0.721} & \textbf{0.528} \\
&720 & 0.897 & 0.574 & \textbf{0.884} & \textbf{0.559} & 0.785 & 0.552 & \textbf{0.771} & \textbf{0.539} & 0.797 & 0.565 & \textbf{0.767} & \textbf{0.544} \\
 \midrule 
 \multirow[c]{4}{*}{\rotatebox{90}{ZafNoo}} 
& 96  & 0.515 & 0.486 & \textbf{0.505} & \textbf{0.479} & 0.471 & 0.434 & \textbf{0.457} & \textbf{0.426} & 0.473 & 0.448 & \textbf{0.458} & \textbf{0.435} \\
&192 & 0.552 & 0.505 & \textbf{0.542} & \textbf{0.495} & 0.532 & 0.476 & \textbf{0.517} & \textbf{0.464} & 0.547 & 0.482 & \textbf{0.526} & \textbf{0.465} \\
&336 & 0.583 & 0.516 & \textbf{0.574} & \textbf{0.507} & 0.569 & 0.494 & \textbf{0.554} & \textbf{0.483} & 0.571 & 0.496 & \textbf{0.553} & \textbf{0.484} \\
&720 & 0.612 & 0.532 & \textbf{0.594} & \textbf{0.517} & 0.629 & 0.522 & \textbf{0.610} & \textbf{0.509} & 0.659 & 0.538 & \textbf{0.633} & \textbf{0.522} \\

\bottomrule
\end{tabular}}
\end{table*}

\begin{table*}[!t]
\caption{The table reports MSE and MAE of pre-trained models for different forecasting horizons $F \in \{96, 192, 336, 720\}$. The better results are highlighted in \textbf{bold}.}
\label{full results of pre-trained models}
\resizebox{\columnwidth}{!}{
\begin{tabular}{c|c|cccc|cccc|cccc}
    \toprule
\multicolumn{2}{c|}{Model} & \multicolumn{4}{c|}{Timer (2024)} & \multicolumn{4}{c|}{Moment (2024)} & \multicolumn{4}{c}{TTM (2024)} \\    \midrule
\multicolumn{2}{c|}{CoRA} & \multicolumn{2}{c}{\without} & \multicolumn{2}{c|}{\with} & \multicolumn{2}{c}{\without} & \multicolumn{2}{c|}{\with} & \multicolumn{2}{c}{\without} & \multicolumn{2}{c}{\with}\\     \midrule
\multicolumn{2}{c|}{Metric} & MSE & MAE & MSE & MAE & MSE & MAE & MSE & MAE & MSE & MAE & MSE & MAE \\ \midrule

\multirow[c]{4}{*}{\rotatebox{90}{ETTh1}} 
& 96  & 0.395 & 0.409 & \textbf{0.390} & \textbf{0.405} & 0.409 & 0.422 & \textbf{0.395} & \textbf{0.411} & 0.364 & 0.392 & \textbf{0.357} & \textbf{0.386} \\
& 192 & 0.458 & 0.458 & \textbf{0.453} & \textbf{0.446} & 0.428 & 0.436 & \textbf{0.416} & \textbf{0.432} & 0.391 & 0.410 & \textbf{0.381} & \textbf{0.400} \\
& 336 & 0.496 & 0.487 & \textbf{0.495} & \textbf{0.479} & 0.457 & 0.451 & \textbf{0.449} & \textbf{0.441} & 0.411 & 0.430 & \textbf{0.408} & \textbf{0.426} \\
& 720 & 0.851 & 0.652 & \textbf{0.833} & \textbf{0.650} & 0.483 & 0.483 & \textbf{0.467} & \textbf{0.473} & 0.454 & 0.472 & \textbf{0.444} & \textbf{0.467} \\
\midrule 
  \multirow[c]{4}{*}{\rotatebox{90}{ETTh2}}
 & 96  & 0.292 & 0.344 & \textbf{0.287} & \textbf{0.339} & 0.310 & 0.361 & \textbf{0.302} & \textbf{0.349} & 0.272 & 0.330 & \textbf{0.271} & \textbf{0.321} \\
& 192 & 0.372 & 0.397 & \textbf{0.367} & \textbf{0.390} & 0.349 & 0.389 & \textbf{0.337} & \textbf{0.386} & 0.340 & 0.373 & \textbf{0.334} & \textbf{0.369} \\
& 336 & 0.373 & 0.414 & \textbf{0.371} & \textbf{0.410} & 0.365 & 0.408 & \textbf{0.359} & \textbf{0.404} & 0.372 & 0.402 & \textbf{0.370} & \textbf{0.401} \\
& 720 & 0.441 & 0.466 & \textbf{0.435} & \textbf{0.453} & 0.402 & 0.437 & \textbf{0.401} & \textbf{0.423} & 0.386 & 0.429 & \textbf{0.384} & \textbf{0.420} \\
\midrule 
  \multirow[c]{4}{*}{\rotatebox{90}{ETTm1}}
 & 96  & 0.303 & 0.351 & \textbf{0.302} & \textbf{0.349} & 0.312 & 0.358 & \textbf{0.310} & \textbf{0.353} & 0.299 & 0.345 & \textbf{0.297} & \textbf{0.342} \\
& 192 & 0.364 & 0.390 & \textbf{0.360} & \textbf{0.384} & 0.342 & 0.375 & \textbf{0.339} & \textbf{0.371} & 0.341 & 0.367 & \textbf{0.341} & \textbf{0.361} \\
& 336 & 0.405 & 0.414 & \textbf{0.400} & \textbf{0.408} & 0.368 & 0.391 & \textbf{0.367} & \textbf{0.379} & 0.367 & 0.383 & \textbf{0.359} & \textbf{0.378} \\
& 720 & 0.750 & 0.560 & \textbf{0.747} & \textbf{0.549} & 0.416 & 0.417 & \textbf{0.407} & \textbf{0.403} & 0.422 & 0.413 & \textbf{0.413} & \textbf{0.411} \\
\midrule 
  \multirow[c]{4}{*}{\rotatebox{90}{ETTm2}}
 & 96  & 0.168 & 0.250 & \textbf{0.166} & \textbf{0.243} & 0.177 & 0.265 & \textbf{0.173} & \textbf{0.257} & 0.164 & 0.250 & \textbf{0.160} & \textbf{0.247} \\
& 192 & 0.238 & 0.303 & \textbf{0.236} & \textbf{0.297} & 0.228 & 0.298 & \textbf{0.226} & \textbf{0.296} & 0.223 & 0.291 & \textbf{0.218} & \textbf{0.287} \\
& 336 & 0.323 & 0.362 & \textbf{0.315} & \textbf{0.353} & 0.278 & 0.334 & \textbf{0.274} & \textbf{0.330} & 0.283 & 0.331 & \textbf{0.278} & \textbf{0.323} \\
& 720 & 0.386 & 0.415 & \textbf{0.384} & \textbf{0.411} & 0.364 & 0.389 & \textbf{0.357} & \textbf{0.384} & 0.365 & 0.381 & \textbf{0.356} & \textbf{0.372} \\
\midrule 
  \multirow[c]{4}{*}{\rotatebox{90}{Electricity}}
 & 96  & 0.140 & 0.236 & \textbf{0.137} & \textbf{0.231} & 0.159 & 0.242 & \textbf{0.158} & \textbf{0.235} & 0.146 & 0.246 & \textbf{0.143} & \textbf{0.240} \\
& 192 & 0.163 & 0.257 & \textbf{0.163} & \textbf{0.253} & 0.187 & 0.265 & \textbf{0.183} & \textbf{0.256} & 0.166 & 0.265 & \textbf{0.165} & \textbf{0.264} \\
& 336 & 0.185 & 0.281 & \textbf{0.185} & \textbf{0.278} & 0.258 & 0.287 & \textbf{0.256} & \textbf{0.285} & 0.181 & 0.282 & \textbf{0.180} & \textbf{0.278} \\
& 720 & 0.320 & 0.368 & \textbf{0.316} & \textbf{0.359} & 0.360 & 0.372 & \textbf{0.348} & \textbf{0.364} & 0.224 & 0.316 & \textbf{0.219} & \textbf{0.309} \\
\midrule 
  \multirow[c]{4}{*}{\rotatebox{90}{Traffic}}
 & 96  & 0.383 & 0.272 & \textbf{0.379} & \textbf{0.266} & 0.392 & 0.278 & \textbf{0.390} & \textbf{0.272} & 0.450 & 0.326 & \textbf{0.445} & \textbf{0.320} \\
& 192 & 0.414 & 0.286 & \textbf{0.411} & \textbf{0.280} & 0.444 & 0.297 & \textbf{0.429} & \textbf{0.287} & 0.467 & 0.331 & \textbf{0.459} & \textbf{0.324} \\
& 336 & 0.436 & 0.299 & \textbf{0.426} & \textbf{0.294} & 0.436 & 0.302 & \textbf{0.429} & \textbf{0.293} & 0.493 & 0.346 & \textbf{0.489} & \textbf{0.341} \\
& 720 & 0.571 & 0.484 & \textbf{0.559} & \textbf{0.484} & 0.552 & 0.494 & \textbf{0.540} & \textbf{0.478} & 0.534 & 0.366 & \textbf{0.530} & \textbf{0.358} \\
\midrule 
  \multirow[c]{4}{*}{\rotatebox{90}{Solar}}
 & 96  & 0.170 & 0.219 & \textbf{0.169} & \textbf{0.216} & 0.203 & 0.270 & \textbf{0.201} & \textbf{0.268} & 0.255 & 0.208 & \textbf{0.251} & \textbf{0.206} \\
& 192 & 0.197 & 0.248 & \textbf{0.196} & \textbf{0.241} & 0.215 & 0.276 & \textbf{0.209} & \textbf{0.274} & 0.271 & 0.241 & \textbf{0.269} & \textbf{0.241} \\
& 336 & 0.203 & 0.255 & \textbf{0.198} & \textbf{0.254} & 0.225 & 0.282 & \textbf{0.224} & \textbf{0.272} & 0.274 & 0.239 & \textbf{0.274} & \textbf{0.239} \\
& 720 & 0.336 & 0.337 & \textbf{0.328} & \textbf{0.335} & 0.225 & 0.281 & \textbf{0.218} & \textbf{0.281} & 0.278 & 0.237 & \textbf{0.271} & \textbf{0.235} \\
\midrule 
  \multirow[c]{4}{*}{\rotatebox{90}{Weather}}
 & 96  & 0.150 & 0.200 & \textbf{0.148} & \textbf{0.195} & 0.170 & 0.227 & \textbf{0.164} & \textbf{0.224} & 0.149 & 0.196 & \textbf{0.149} & \textbf{0.194} \\
& 192 & 0.215 & 0.264 & \textbf{0.214} & \textbf{0.259} & 0.211 & 0.260 & \textbf{0.208} & \textbf{0.260} & 0.195 & 0.239 & \textbf{0.193} & \textbf{0.237} \\
& 336 & 0.283 & 0.317 & \textbf{0.278} & \textbf{0.312} & 0.256 & 0.293 & \textbf{0.251} & \textbf{0.283} & 0.245 & 0.278 & \textbf{0.240} & \textbf{0.274} \\
& 720 & 0.360 & 0.375 & \textbf{0.356} & \textbf{0.366} & 0.327 & 0.343 & \textbf{0.323} & \textbf{0.337} & 0.315 & 0.330 & \textbf{0.313} & \textbf{0.322} \\
\midrule 
  \multirow[c]{4}{*}{\rotatebox{90}{AQShunyi}}
 & 96  & 0.534 & 0.434 & \textbf{0.524} & \textbf{0.432} & 0.729 & 0.491 & \textbf{0.728} & \textbf{0.475} & 0.638 & 0.481 & \textbf{0.635} & \textbf{0.473} \\
& 192 & 0.713 & 0.521 & \textbf{0.705} & \textbf{0.511} & 0.706 & 0.509 & \textbf{0.685} & \textbf{0.495} & 0.687 & 0.501 & \textbf{0.669} & \textbf{0.498} \\
& 336 & 0.735 & 0.526 & \textbf{0.715} & \textbf{0.518} & 0.724 & 0.521 & \textbf{0.709} & \textbf{0.520} & 0.710 & 0.516 & \textbf{0.694} & \textbf{0.512} \\
& 720 & 0.792 & 0.543 & \textbf{0.779} & \textbf{0.529} & 0.778 & 0.544 & \textbf{0.774} & \textbf{0.528} & 0.766 & 0.543 & \textbf{0.757} & \textbf{0.540} \\
\midrule 
  \multirow[c]{4}{*}{\rotatebox{90}{ZafNoo}}
 & 96  & 0.437 & 0.400 & \textbf{0.431} & \textbf{0.396} & 0.476 & 0.442 & \textbf{0.475} & \textbf{0.430} & 0.424 & 0.405 & \textbf{0.420} & \textbf{0.400} \\
& 192 & 0.522 & 0.453 & \textbf{0.517} & \textbf{0.452} & 0.522 & 0.465 & \textbf{0.504} & \textbf{0.448} & 0.486 & 0.441 & \textbf{0.484} & \textbf{0.437} \\
& 336 & 0.548 & 0.480 & \textbf{0.547} & \textbf{0.471} & 0.558 & 0.482 & \textbf{0.540} & \textbf{0.482} & 0.537 & 0.468 & \textbf{0.523} & \textbf{0.456} \\
& 720 & 0.617 & 0.514 & \textbf{0.610} & \textbf{0.508} & 0.594 & 0.501 & \textbf{0.575} & \textbf{0.489} & 0.571 & 0.493 & \textbf{0.561} & \textbf{0.479}
 \\
\bottomrule
\end{tabular}}
\vspace{40mm}
\end{table*}

\newpage
\newpage
\section{More Analysis on CoRA}
\label{More Analysis on CorA}
\subsection{Comparison in Different Sampling Rate}
\label{Comparison with Other Correlation Plugins}

To further demonstrate the advantages of CoRA, we compared its performance with LIFT~\citep{zhaorethinking} and C-LoRA~\citep{C-LoRA} using TTM as the backbone and setting H to 96, under different sampling rates in the few-shot setting. As shown in Table \ref{sampling rate}, CoRA consistently outperforms LIFT and C-LoRA, especially at lower sampling rates where LIFT and C-LoRA suffer from insufficient training data, leading to degraded performance. CoRA's superior performance is attributed to its comprehensive modeling of correlations and efficient utilization of TSFMs. As the sampling rate increases, LIFT and C-LoRA also improve TSFMs' performance, but at a higher sampling rate will be a higher training cost.

\begin{table*}[!ht]
    \caption{The sampling rate set to \{5\%, 10\%, 15\%, 20\%, 25\%\}. \textbf{Black}: the best, \underline{Underline}: the 2nd best.}
\label{sampling rate}
\resizebox{\columnwidth}{!}{
    \begin{tabular}{c|c|c|c|c|c|c|c|c|c|c}
    \toprule
        Dataset & \multicolumn{5}{c|}{ETTm2} &  \multicolumn{5}{c}{Electricity} \\ 
        \addlinespace\cline{1-11} \addlinespace
        Rate & {5\%} & {10\%} & {15\%} & {20\%}& {25\%} & {5\%} & {10\%} & {15\%} & {20\%}& {25\%}  \\ \addlinespace\cline{1-11} \addlinespace
        TTM & 0.164&\underline{0.162}&0.163&0.160&0.157&\underline{0.146}&\underline{0.143}&0.145&0.141&0.142    \\ \addlinespace\cline{1-11} \addlinespace
        
        + LIFT &\underline{0.162}&0.163&\underline{0.161}&\underline{0.158}&0.156&0.149&0.145&\underline{0.144}&\uline{0.139}&\uline{0.139}    \\ \addlinespace\cline{1-11} \addlinespace
        + C-LoRA &0.166&0.163&0.162&0.159&\underline{0.155}&0.148&0.146&0.147&0.140&0.141   \\ \addlinespace\cline{1-11} \addlinespace
        + CoRA & \makecell{\textbf{0.160} \\$\scriptstyle\pm 0.003$}  & \makecell{\textbf{0.158} \\$\scriptstyle\pm 0.002$}  & \makecell{\textbf{0.159} \\$\scriptstyle\pm 0.003$}  & \makecell{\textbf{0.157} \\$\scriptstyle\pm 0.001$}  & \makecell{\textbf{0.153} \\$\scriptstyle\pm 0.002$}  & \makecell{\textbf{0.143} \\$\scriptstyle\pm 0.003$}  & \makecell{\textbf{0.141} \\$\scriptstyle\pm 0.003$}  & \makecell{\textbf{0.142} \\$\scriptstyle\pm 0.002$} & \makecell{\textbf{0.138} \\$\scriptstyle\pm 0.002$}  & \makecell{\textbf{0.135} \\$\scriptstyle\pm 0.001$}   \\ 
        \bottomrule
    \end{tabular}}
\end{table*}

\subsection{Hyperparameter Sensitivity}
\label{Hyperparameter Sensitivity}
With TTM as the backbone and H set to 96, we study the hyperparameter sensitivity of CoRA, including the Degree of Polynomial ($K$), the size of decomposition ($M$), the layers' number of projectors before and after HPCL ($l_1$ and $l_2$). 
Figure \ref{Degree of Polynomial} show that $K$ is a robust hyperparameter, and we often choose 3 or 4 as common configurations.
Figure \ref{Hyperparameter Sensitivity M} illustrates that the selection of M does not need to increase rapidly with the number of channels.
Figure \ref{Hyperparameter Sensitivity N_1} and Figure \ref{Hyperparameter Sensitivity N_2} show that too few layers may lead to insufficient fitting capacity, while too many can diminish generalization ability.
We often choose 3 or 5 as common configurations. 

\begin{figure*}[!htbp]
  \centering
  \subfloat[Degree of Polynomial]
  {\includegraphics[width=0.247\textwidth]{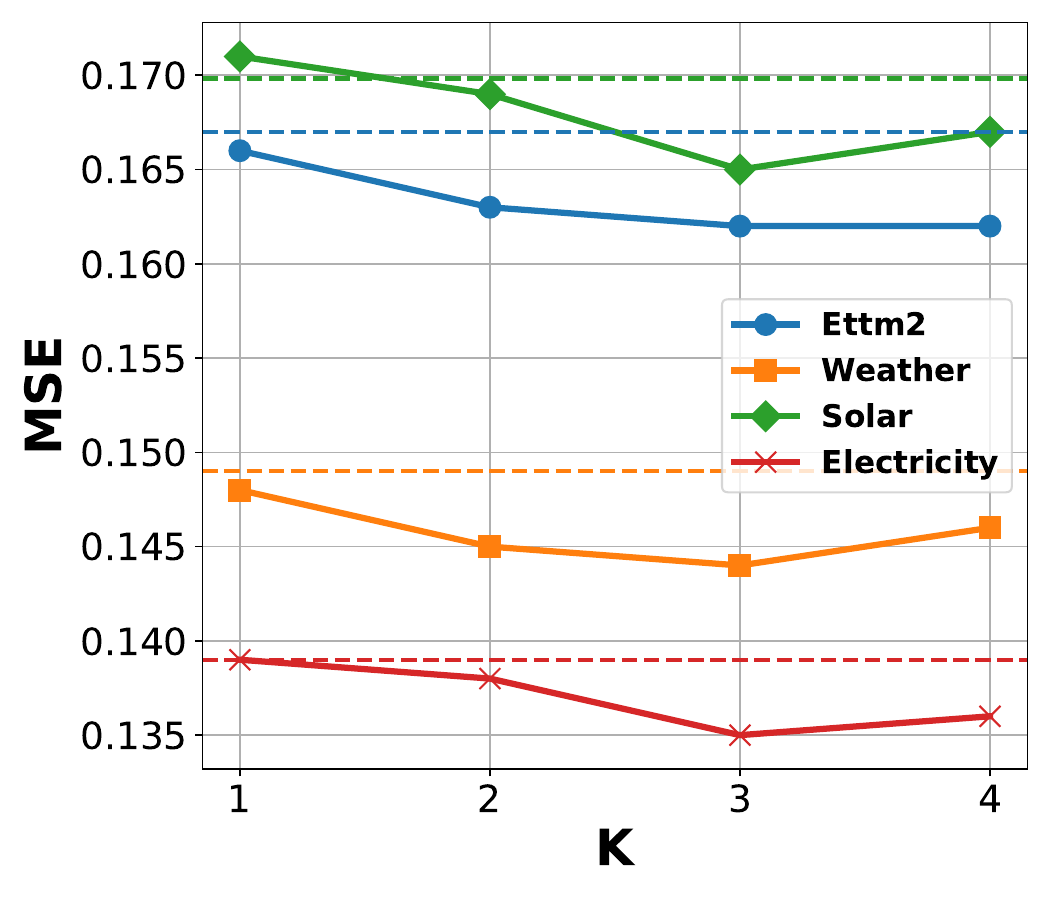}\label{Degree of Polynomial}}
   \hspace{0.5mm}\subfloat[Decomposition Size]
  {\includegraphics[width=0.247\textwidth]{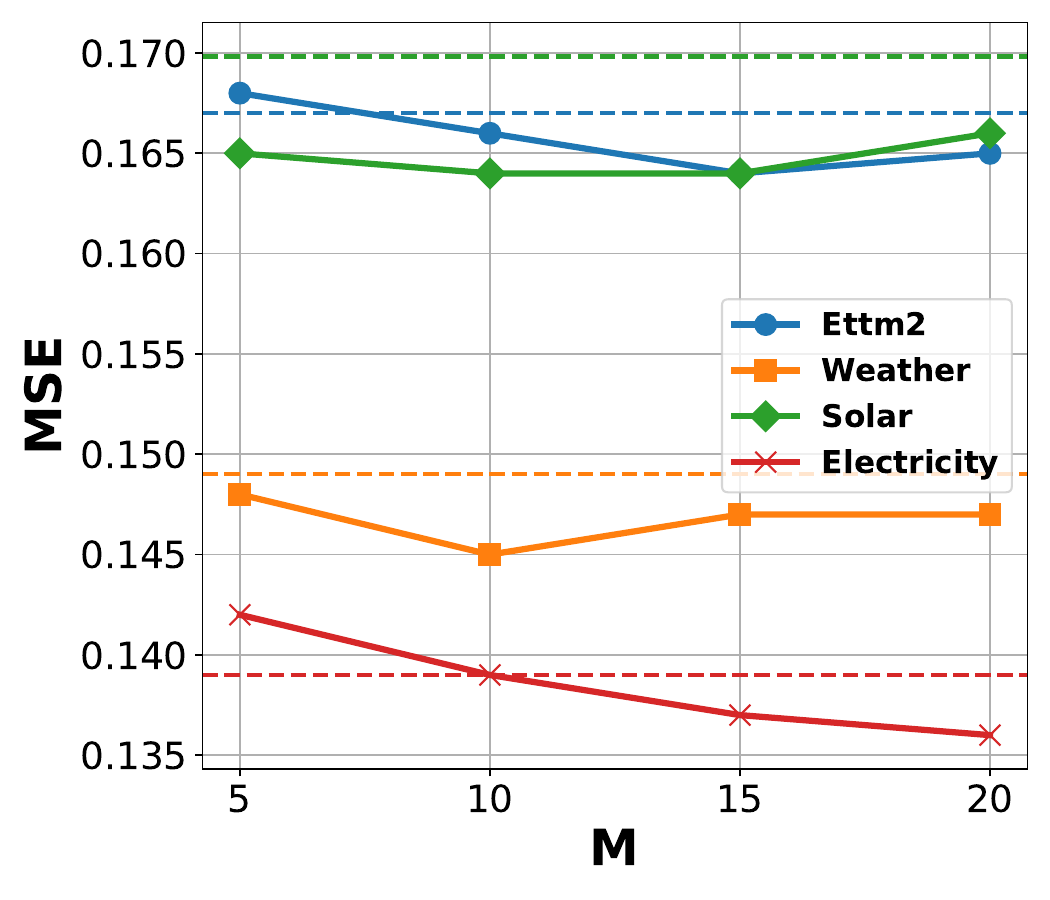}\label{Hyperparameter Sensitivity M}}
  \hspace{0.5mm}\subfloat[$\mathcal{P}_1$,$\mathcal{P}_2$ Layers]
  {\includegraphics[width=0.247\textwidth]{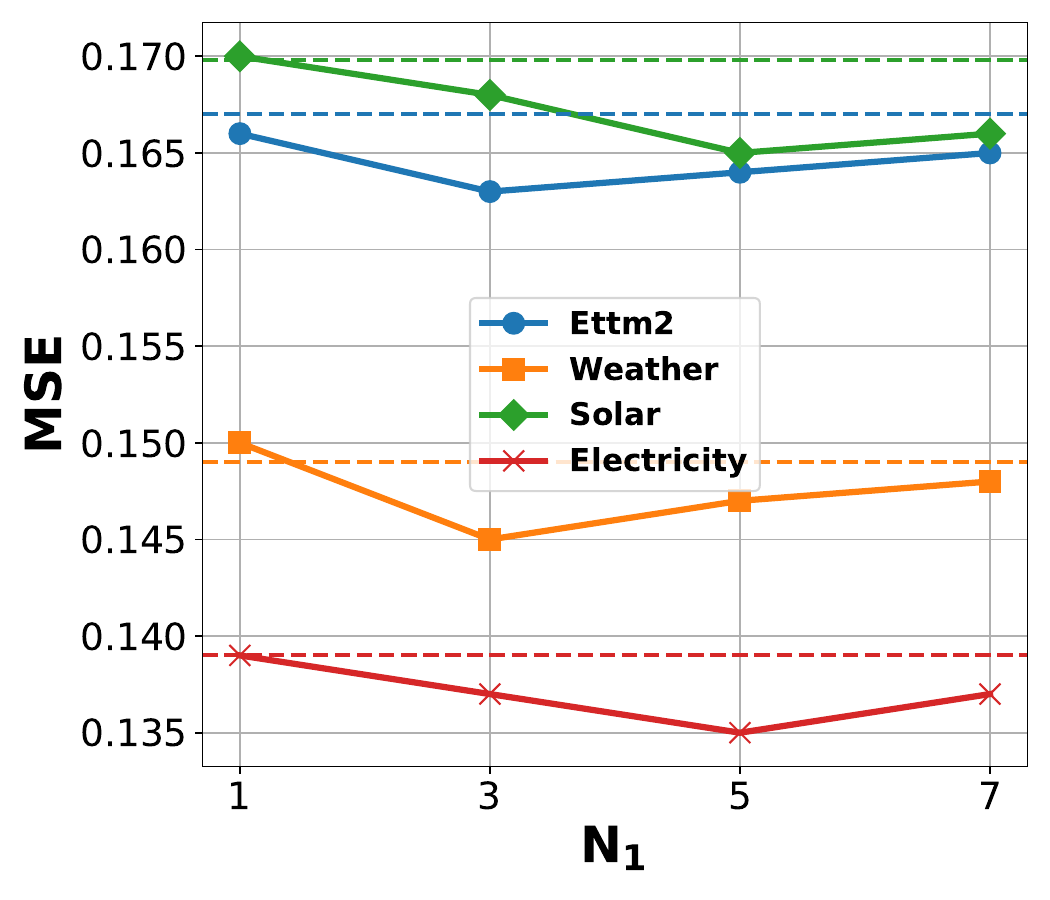}\label{Hyperparameter Sensitivity N_1}}
  \hspace{0.5mm}\subfloat[$\mathcal{P}_3$,$\mathcal{P}_4$ Layers]
  {\includegraphics[width=0.247\textwidth]{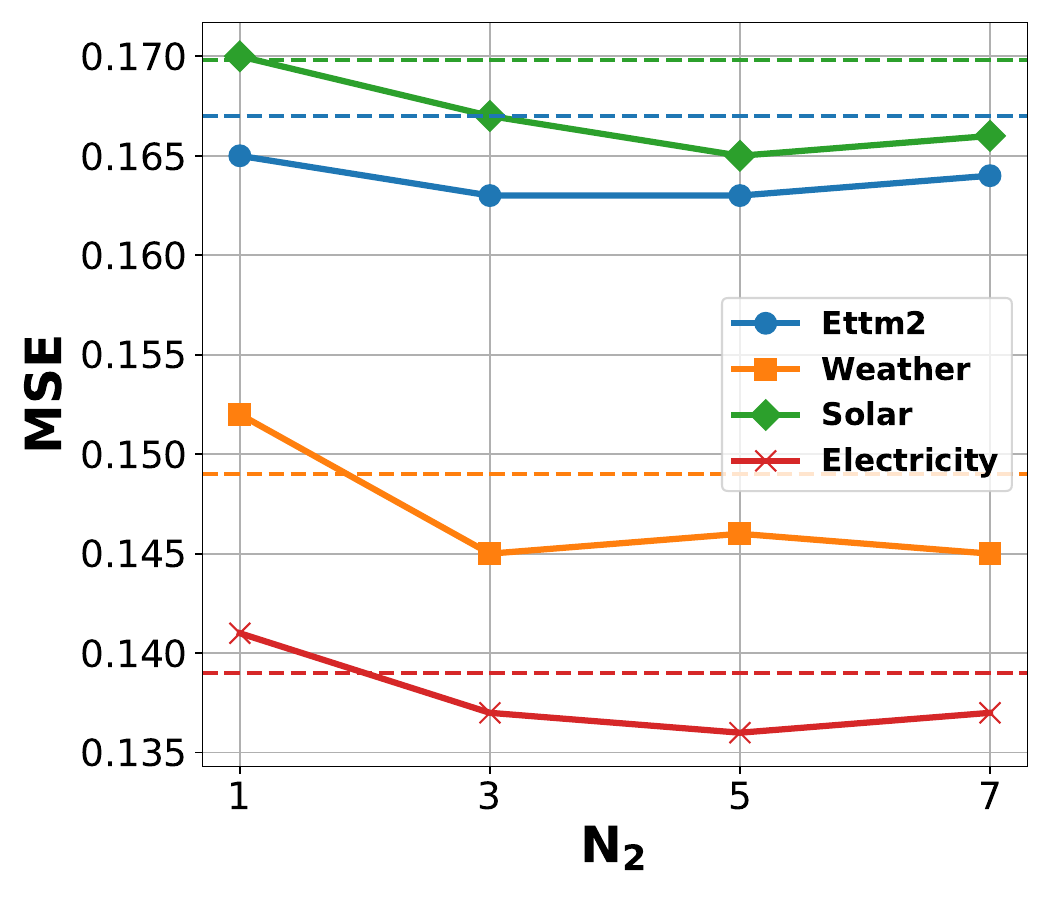}\label{Hyperparameter Sensitivity N_2}}
  \caption{Parameter sensitivity of main hyper-parameters in CoRA.}
\end{figure*}
\subsection{Visualization of Heterogeneous Spaces.}
To further demonstrate the effectiveness of CoRA in modeling the DCorr, HCorr and PCorr, we conduct a visualization experiment. Specifically, by examining samples at 3 time steps and 4 channels in the Weather dataset, we compare the similarities between representations in the heterogeneous space. The result is shown in Figure \ref{v}. Among them, Figure \ref{v1} illustrates the visualization of samples from the Weather dataset, with each time step comprising 64 time points. Figure \ref{v2} shows the cosine similarity between the channel representations in positive and negative spaces. Based on observations, it can be concluded that within the 0-64 time points, Channel 1 and Channel 3 as well as Channel 2 and Channel 4 exhibit significant positive correlations. Within the 64-128 time points,Channel 2 Channel 3 and Channel 4 show significant positive correlations, while Channel 3 demonstrates channel independence. Within the 128-192 time points,Channel 1 and Channel 3 exhibit significant positive correlations, while they show significant negative correlations with Channel 2 and Channel 4. These findings align with the actual data, demonstrating that CoRA is capable of simultaneously capturing DCorr HCorr and PCorr.
\label{Visualization}
\begin{figure*}[!htbp]
  \centering
  \begin{subfigure}[t]{0.75\textwidth}
    \centering 
    \includegraphics[width=\linewidth]{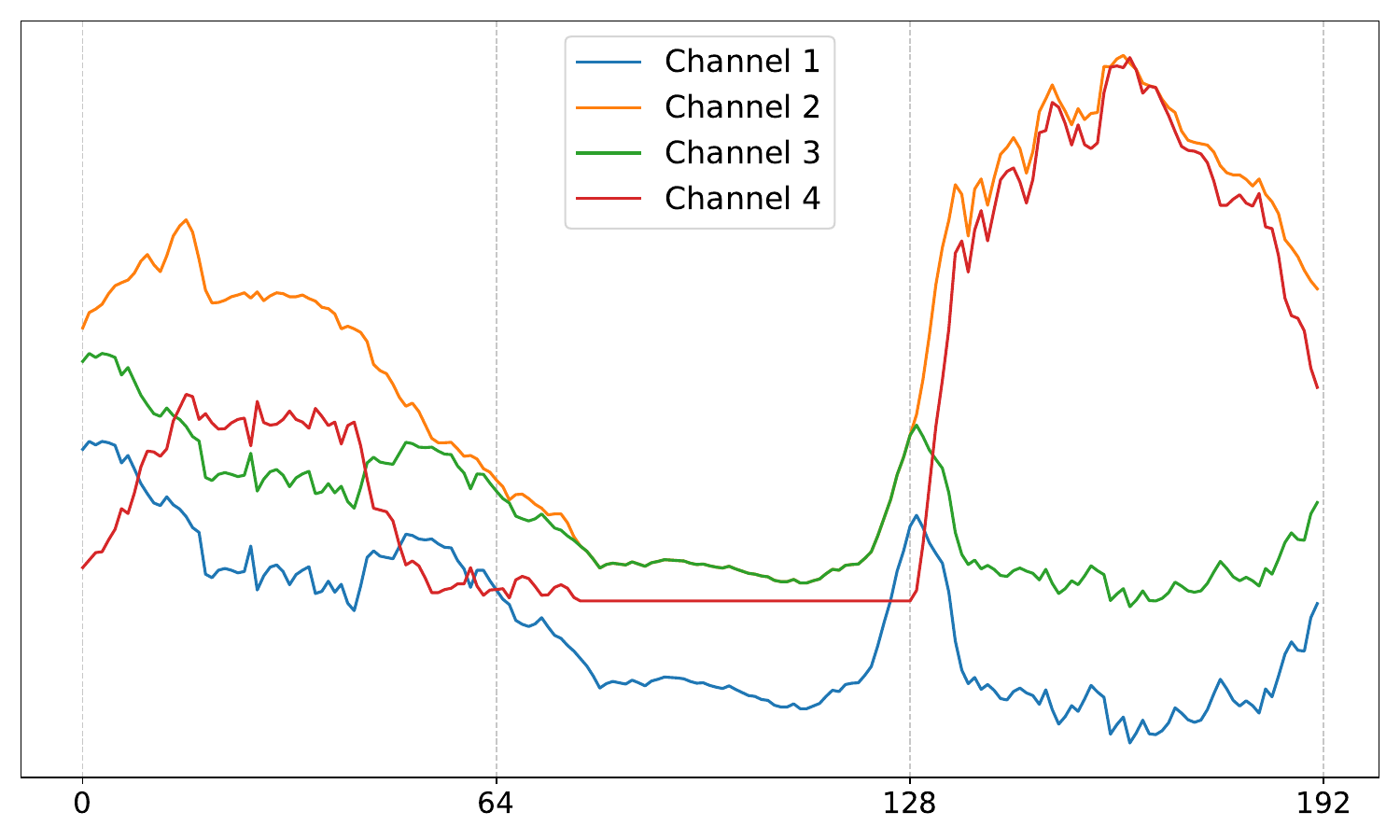} 
    \caption{Samples at 3 time steps and 4 channels in Weather dataset.}
    \label{v1}
  \end{subfigure}
  
  \begin{subfigure}[t]{1\textwidth}
    \centering
    \includegraphics[width=\linewidth]{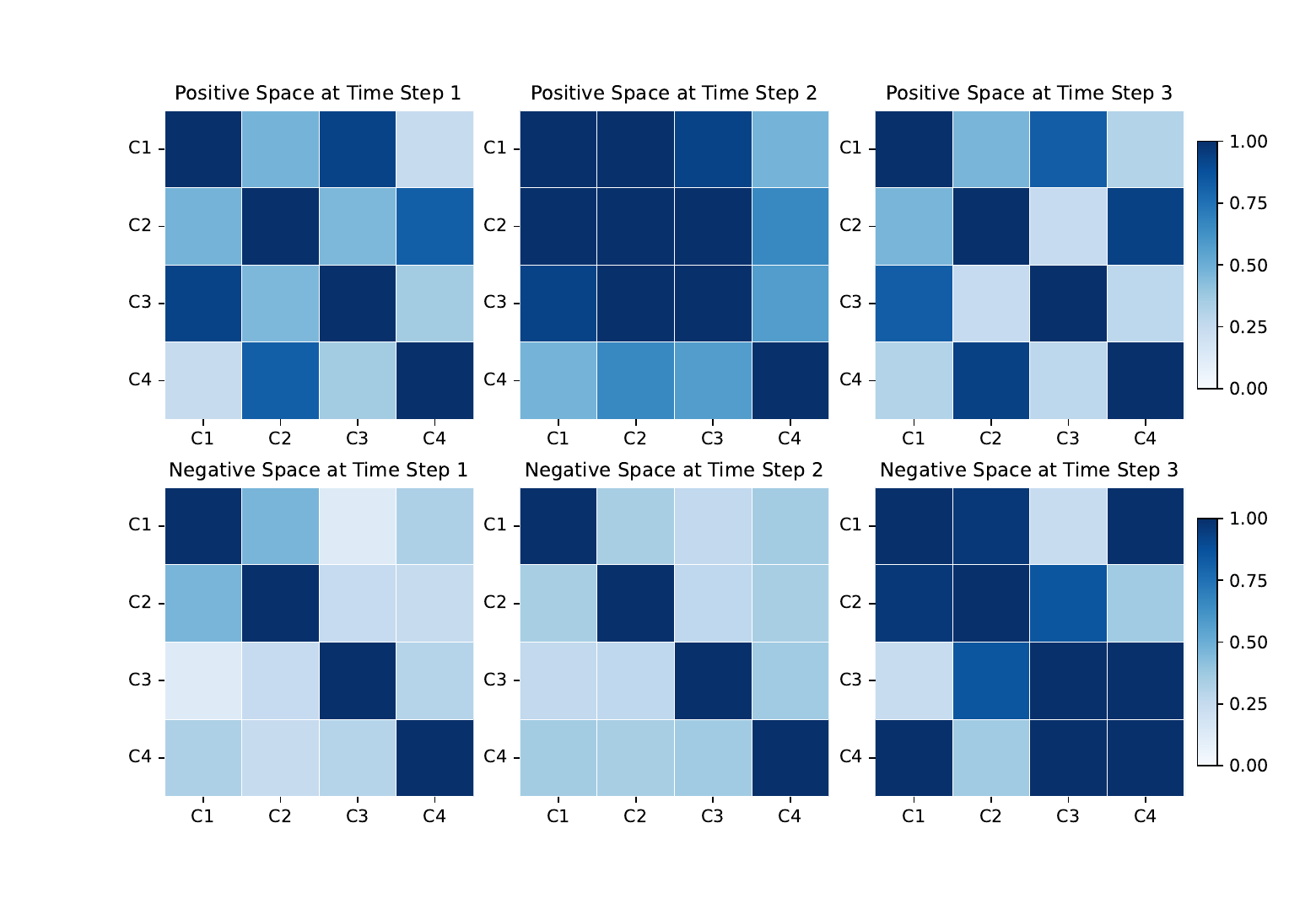}
    \caption{The similarity of representations in positive and negative spaces at 3 time steps.}
    \label{v2}
  \end{subfigure}
  \caption{Visualization of Heterogeneous Spaces}
  \label{v}
\end{figure*}

\newpage
\newpage
\section{The Use of Large Language Models}
The use of open-source Large Language Models (LLMs) in this work was strictly limited to assisting with the translation of certain terms and polishing a small portion of the text. LLMs did not contribute to the conceptual aspects of the research, including information retrieval, knowledge discovery, or the ideation process.

\end{document}